\def\eqref#1{equation~\ref{#1}}
\def\1{\bm{1}}
\DeclareMathAlphabet{\mathsfit}{\encodingdefault}{\sfdefault}{m}{sl}
\SetMathAlphabet{\mathsfit}{bold}{\encodingdefault}{\sfdefault}{bx}{n}
\newcommand{\E}{\mathbb{E}}
\DeclareMathOperator*{\argmax}{arg\,max}
\DeclareMathOperator*{\argmin}{arg\,min}
\newcommand{\PP}{\mathbb P}
\newcommand{\BR}{\mathbbm 1}
\definecolor{grey}{rgb}{0.33, 0.33, 0.33}
\newcommand{\SPL}{CORES$^2$}
\newcommand{\squishlist}{
\begin{list}{{{\small{$\bullet$}}}}
{\setlength{\itemsep}{3pt}      \setlength{\parsep}{1pt}
\setlength{\topsep}{1pt}       \setlength{\partopsep}{0pt}
\setlength{\leftmargin}{1em} \setlength{\labelwidth}{1em}
\setlength{\labelsep}{0.5em} } }
\newcommand{\squishend}{  \end{list}  }
\renewcommand*\env@matrix[1][*\c@MaxMatrixCols c]{%
  \hskip -\arraycolsep
  \let\@ifnextchar\new@ifnextchar
  \array{#1}}
\newcommand{\rev}[1]{{#1}}
\newcommand{\clar}[1]{\textbf{\color{green}(NEED CLARIFICATION: #1)}}
\newcommand{\response}[1]{\textbf{\color{magenta}(RESPONSE: #1)}}
\newcommand{\rev}[1]{#1}
\newcommand{\com}[1]{}
\newcommand{\clar}[1]{}
\newcommand{\response}[1]{}
\newtheorem{thm}{Theorem}
\newtheorem{prop}{Proposition}
\newtheorem{lem}{Lemma}
\newtheorem{cor}{Corollary}
\newenvironment{customthm}[1]
  {\innercustomthm}
  {\endinnercustomthm}
\newtheorem{ap}{Assumption}
\DeclareMathOperator*{\sumint}{%
\mathchoice%
  {\ooalign{$\displaystyle\sum$\cr\hidewidth$\displaystyle\int$\hidewidth\cr}}
  {\ooalign{\raisebox{.14\height}{\scalebox{.7}{$\textstyle\sum$}}\cr\hidewidth$\textstyle\int$\hidewidth\cr}}
  {\ooalign{\raisebox{.2\height}{\scalebox{.6}{$\scriptstyle\sum$}}\cr$\scriptstyle\int$\cr}}
  {\ooalign{\raisebox{.2\height}{\scalebox{.6}{$\scriptstyle\sum$}}\cr$\scriptstyle\int$\cr}}
}
\title{Learning with Instance-Dependent Label \\Noise: A Sample Sieve Approach}
\author{Hao Cheng\thanks{Equal contributions in alphabetical ordering. Hao leads experiments and Zhaowei leads theories.}$~~^{\mathsection}$,  Zhaowei Zhu$^{*\dagger}$, Xingyu Li$^{\dagger}$, Yifei Gong$^\mathsection$, Xing Sun$^\mathsection$, and Yang Liu\thanks{Corresponding authors: Y. Liu and Z. Zhu~ \texttt{\{yangliu,zwzhu\}@ucsc.edu}.}$^\dagger$\\
$^\dagger$University of California, Santa Cruz, $^\mathsection$Tencent YouTu Lab\\
\texttt{\{zwzhu,xli279,yangliu\}@ucsc.edu},\\
\texttt{\{louischeng,yifeigong,winfredsun\}@tencent.com}
}
\begin{document}

\maketitle

\begin{abstract}
    Human-annotated labels are often prone to noise, and the presence of such noise will degrade the performance of the resulting deep neural network (DNN) models. Much of the literature (with several recent exceptions) of learning with noisy labels focuses on the case when the label noise is independent of features. Practically, annotations errors tend to be instance-dependent and often depend on the difficulty levels of recognizing a certain task. Applying existing results from instance-independent settings would require a significant amount of estimation of noise rates. Therefore, providing theoretically rigorous solutions for learning with instance-dependent label noise remains a challenge. In this paper, we propose \emph{\SPL{} (COnfidence REgularized Sample Sieve)}, which progressively sieves out corrupted examples. The implementation of \SPL{}  does not require specifying noise rates and yet we are able to provide theoretical guarantees of \SPL{} in filtering out the corrupted examples. This high-quality sample sieve allows us to treat clean examples and the corrupted ones separately in training a DNN solution, and such a separation is shown to be advantageous in the instance-dependent noise setting.
    We demonstrate the performance of \SPL{} on CIFAR10 and CIFAR100 datasets with synthetic instance-dependent label noise and Clothing1M with real-world human noise. As of independent interests, our sample sieve provides a generic machinery for anatomizing noisy datasets and provides a flexible interface for various robust training techniques to further improve the performance. Code is available at \url{https://github.com/UCSC-REAL/cores}.
    
\end{abstract}

\vspace{-10pt}
\section{Introduction}
\vspace{-5pt}

Deep neural networks (DNNs) have gained popularity in a wide range of applications. 
The remarkable success of DNNs often relies on the availability of large-scale datasets. However, data annotation inevitably introduces label noise, and it is extremely expensive and time-consuming to clean up the corrupted labels.
The existence of label noise can weaken the true correlation between features and labels as well as introducing artificial correlation patterns. Thus, mitigating the effects of noisy labels becomes a critical issue that needs careful treatment. %

It is challenging to avoid overfitting to noisy labels, especially when the noise depends on both true labels $Y$ and features $X$. Unfortunately, this often tends to be the case where human annotations are prone to different levels of errors for tasks with varying difficulty levels. Recent work has also shown that the presence of instance-dependent noisy labels imposes additional challenges and cautions to training in this scenario \citep{liu2021importance}. 
For such instance-dependent (or feature-dependent, instance-based) label noise settings, theory-supported works usually focus on loss-correction which requires estimating noise rates \citep{xia2020parts,berthon2020confidence}. 
Recent work by \cite{cheng2017learning} addresses the bounded instance-based noise by first learning the noisy distribution and then distilling examples according to some thresholds.\footnote{The proposed solution is primarily studied for the binary case in \cite{cheng2017learning}.} 
However, with a limited size of datasets, learning an accurate noisy distribution for each example is a non-trivial task. Additionally, the size and the quality of distilled examples are sensitive to the thresholds for distillation. %

Departing from the above line of works, we design a sample sieve with theoretical guarantees to provide a high-quality splitting of clean and corrupted examples without the need to estimate noise rates.
Instead of learning the noisy distributions or noise rates, we {focus on learning the underlying clean distribution and} design a regularization term to help improve the confidence of the learned classifier, which is proven to help safely sieve out corrupted examples.
With the division between ``clean'' and ``corrupted'' examples, our training enjoys performance improvements by treating the clean examples (using standard loss) and the corrupted ones (using an unsupervised consistency loss) separately. %

We summarize our main contributions:
1) \rev{We propose to train a classifier using a novel confidence regularization (CR) term and theoretically guarantee that,} under mild assumptions, minimizing the confidence regularized cross-entropy (CE) loss on the instance-based noisy distribution is equivalent to minimizing the pure CE loss on the corresponding ``unobservable'' clean distribution. This classifier is also shown to be helpful for evaluating each example to build our sample sieve.%
2) We provide a theoretically sound sample sieve that simply compares the example's regularized loss with a closed-form threshold %
 explicitly determined by predictions from the above trained model using our confidence regularized loss, without any extra estimates.
3) To the best of our knowledge, the proposed \emph{\SPL{} (COnfidence REgularized Sample Sieve)} is the first method that is thoroughly studied for a multi-class classification problem, has theoretical guarantees to avoid overfitting to instance-dependent label noise, and provides high-quality division without knowing or estimating noise rates. 
4) By \emph{decoupling} the regularized loss into separate additive terms, we also provide a novel and promising mechanism for understanding and controlling the effects of general instance-dependent label noise.  
5) \SPL{} achieves competitive performance on multiple datasets, including CIFAR-10, CIFAR-100, and Clothing1M, under different label noise settings. %

\vspace{-2pt}
\textbf{Other related works}~
In addition to recent works by \cite{xia2020parts}, \cite{berthon2020confidence}, and \cite{cheng2017learning},
we briefly overview other most relevant references. Detailed related work is left to Appendix \ref{supp:related}.
Making the loss function robust to label noise is important for building a robust machine learning model \citep{zhang2016understanding}. 
\rev{One popular direction is to perform \emph{loss correction}, which first estimates transition matrix \citep{patrini2017making,vahdat2017toward,xiao2015learning,zhu2021clusterability,yao2020dual},
and then performs correction/reweighting via forward or backward propagation, or further revises the estimated transition matrix with controllable variations \citep{xia2019anchor}}. The other line of work focuses on  designing specific losses without estimating transition matrices \citep{natarajan2013learning,xu2019l_dmi,liu2019peer,wei2021when}. However, these works assume the label noise is instance-independent which limits their extension.  \rev{Another approach is \emph{sample selection} \citep{jiang2017mentornet,han2018co,yu2019does,northcutt2019confident,yao2020searching,wei2020combating,zhang2020self}, which selects the ``small loss'' examples as clean ones. However, we find this approach only works well on the instance-independent label noise.} 
Approaches such as label correction \citep{veit2017learning,li2017learning,han2019deep} or semi-supervised learning \citep{Li2020DivideMix,nguyen2019self} also lack guarantees for the instance-based label noise.

\vspace{-7pt}
\section{\SPL{}: COnfidence REgularized Sample Sieve}
\vspace{-7pt}

Consider a classification problem on a set of $N$ training examples denoted by $D:=\{( x_n,y_n)\}_{n\in  [N]}$, where $[N] := \{1,2,\cdots,N\}$ is the set of example indices. Examples $(x_n,y_n)$ are drawn according to random variables $(X,Y)\in \mathcal X \times \mathcal Y$ from a joint distribution $\mathcal D$. 
Let $\mathcal D_X$ and $\mathcal D_Y$ be the marginal distributions of $X$ and $Y$.
The classification task aims to identify a classifier $f: \mathcal X \rightarrow \mathcal Y$ that maps $X$ to $Y$ accurately.
One common approach is minimizing the empirical risk using DNNs with respect to the cross-entropy loss defined as
$
    \ell(f( x),y) = - \ln(f_x[y]), ~ y \in [K],
$
where $f_x[y]$ denotes the $y$-th component of $f(x)$ and $K$ is the number of classes.
In real-world applications, such as human-annotated images \citep{krizhevsky2012imagenet,zhang2017improving} and medical diagnosis \citep{agarwal2016learning}, the learner can only observe a set of noisy labels.
For instance, human annotators may wrongly label some images containing cats as ones that contain dogs accidentally or irresponsibly. 
The label noise of each instance is characterized by a noise transition matrix $T(X)$, where each element $T_{ij}(X):=\mathbb P(\widetilde{Y}=j|Y=i,X)$.
The corresponding noisy dataset\footnote{In this paper, the noisy dataset refers to a dataset with noisy examples. A noisy example is either a clean example (whose label is true) or a corrupted example (whose label is wrong).} 
and distribution are denoted by $\widetilde D:=\{( x_n,\tilde y_n)\}_{n\in  [N]}$ and $\widetilde{\mathcal D}$.
Let $\BR{(\cdot)}$ be the indicator function taking value $1$ when the specified condition is satisfied and $0$ otherwise. 
Similar to the goals in surrogate loss \citep{natarajan2013learning}, $L_{\sf{DMI}}$ \citep{xu2019l_dmi} and peer loss \citep{liu2019peer}, we aim to learn a classifier $f$ from the noisy distribution $\widetilde{\mathcal D}$ which also minimizes $\PP(f(X) \neq Y), (X,Y)\sim \mathcal D$. 
Beyond their results, we attempt to propose \emph{a theoretically sound approach addressing a general instance-based noise regime without knowing or estimating noise rates}.

\vspace{-5pt}
\subsection{Confidence Regularization}\label{sec_1}
\vspace{-5pt}

In this section, we present a new confidence regularizer (CR). Our design of the CR is mainly motivated by a recently proposed robust loss function called peer loss \citep{liu2019peer}.
For each example $(x_n,\tilde y_n)$, peer loss has the following form:
\[
\ell_{\text{PL}}(f(x_n),\tilde y_{n}) := \ell(f(x_n),\tilde y_n) - \ell(f(x_{n_1}),\tilde y_{n_2}),
\]
where $(x_{n_1},\tilde y_{n_1})$ and $(x_{n_2},\tilde y_{n_2})$ are two randomly sampled and paired peer examples (with replacement) for $n$. Let $X_{n_1}$ and $\widetilde Y_{n_2}$ be the corresponding random variables. Note $X_{n_1}, \widetilde Y_{n_2}$ are two independent and uniform random variables being each $x_{n'}, n' \in [N]$ and $\tilde{y}_{n'}, n' \in [N]$ with probability $\frac{1}{N}$ respectively: %
$\PP(X_{n_1} = x_{n'}|\widetilde{D}) = \PP(\widetilde Y_{n_2} = y_{n'}|\widetilde{D}) = \frac{1}{N}, \forall n' \in [N]$.
Let $\mathcal D_{\widetilde{Y}|\widetilde{D}}$ be the distribution of $\widetilde Y_{n_2}$ given dataset $\widetilde{D}$.
Peer loss then has the following equivalent form in expectation:
{\small
\begin{align*}
 &\frac{1}{N}\sum_{n \in [N]} \hspace{-3pt} \E_{X_{n_1},\widetilde Y_{n_2}|\widetilde{D}}[\ell(f(x_{n}),\tilde y_{n}) \hspace{-2pt}-\hspace{-2pt} \ell(f(X_{n_1}),\widetilde Y_{n_2})] \\
 = &\frac{1}{N}\sum_{n \in [N]} \bigg[\ell(f(x_n),\tilde y_{n}) \hspace{-2pt}-\hspace{-5pt}  \sum_{n' \in [N]}\hspace{-3pt} \PP(X_{n_1} = x_{n'}|\widetilde{D}) \E_{\mathcal{D}_{\widetilde{Y}|\widetilde{D}}}[\ell(f(x_{n'}),\widetilde Y)]\bigg] \\
 = &\frac{1}{N}\sum_{n \in [N]} \bigg[\ell(f(x_n),\tilde y_{n}) -    \E_{\mathcal{D}_{\widetilde{Y}|\widetilde{D}}}[\ell(f(x_{n}),\widetilde Y)]\bigg].
\end{align*}}

\vspace{-10pt}
This result characterizes a new loss denoted by $\ell_{\text{CA}}$:
\begin{align}
  \ell_{\text{CA}}(f(x_n),\tilde y_{n}) :=    \ell(f(x_n),\tilde y_{n}) - \E_{\mathcal{D}_{\widetilde{Y}|\widetilde{D}}}[\ell(f(x_n),\widetilde Y)]. \label{eqn:ca}
\end{align}

Though not studied rigorously by \cite{liu2019peer}, we show, under conditions\footnote{Detailed conditions for Theorem~\ref{pro:dynamic} are specified at the end of our main contents.}, $\ell_{\text{CA}}$ defined in Eqn. (\ref{eqn:ca}) encourages confident predictions\footnote{Our observation can also help partially explain the robustness property of peer loss \citep{liu2019peer}.} from $f$ by analyzing the gradients: %
\begin{thm}\label{pro:dynamic}
For $\ell_{\text{CA}}(\cdot)$, %
solutions satisfying $f_{x_n}[i]>0, \forall i\in[K]$ are not locally optimal at $(x_n,\tilde{y}_n)$. %
\end{thm}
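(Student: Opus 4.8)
The plan is to show that any candidate solution $f$ with strictly positive components $f_{x_n}[i] > 0$ for all $i \in [K]$ can always be perturbed to strictly decrease $\ell_{\text{CA}}$, hence such an $f$ cannot be a local minimum. The key structural observation is that $\ell_{\text{CA}}$ in Eqn.~(\ref{eqn:ca}) decomposes into a term that rewards confidence on the observed label $\tilde y_n$, namely $\ell(f(x_n),\tilde y_n) = -\ln f_{x_n}[\tilde y_n]$, and a subtracted averaging term $\E_{\mathcal{D}_{\widetilde{Y}|\widetilde{D}}}[\ell(f(x_n),\widetilde Y)]$ that penalizes confidence spread across the label classes weighted by the empirical marginal of $\widetilde Y$. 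Because of the minus sign, increasing the loss on the ``peer'' classes actually decreases $\ell_{\text{CA}}$, so the optimizer is pushed toward collapsing probability mass onto a single coordinate.

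First I would fix an index $n$ and treat the probability vector $f_{x_n} = (f_{x_n}[1], \dots, f_{x_n}[K])$ as the free variable, subject to $\sum_i f_{x_n}[i] = 1$ and $f_{x_n}[i] > 0$. Writing out
\[
\ell_{\text{CA}}(f(x_n),\tilde y_n) = -\ln f_{x_n}[\tilde y_n] + \sum_{j \in [K]} \PP(\widetilde Y = j \mid \widetilde D)\, \ln f_{x_n}[j],
\]
I would consider a direction that moves mass from some coordinate $i \neq \tilde y_n$ (with $f_{x_n}[i] > 0$, which exists by the positivity assumption) onto the coordinate $\tilde y_n$, keeping the constraint satisfied. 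The goal is to compute the directional derivative of $\ell_{\text{CA}}$ along this feasible direction and verify it is strictly negative whenever all coordinates are strictly positive. The first term $-\ln f_{x_n}[\tilde y_n]$ decreases as mass flows to $\tilde y_n$; the subtracted averaging term, which contributes a $+\ln f_{x_n}[j]$ for each class, changes sign contributions so that shrinking $f_{x_n}[i]$ toward $0$ drives the objective further down (since $\ln f_{x_n}[i] \to -\infty$ enters with a positive coefficient). Both effects reinforce each other, so a strictly decreasing feasible direction exists.

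Concretely, I would parametrize the perturbation as $f_{x_n}[\tilde y_n] \mapsto f_{x_n}[\tilde y_n] + t$ and $f_{x_n}[i] \mapsto f_{x_n}[i] - t$ for small $t > 0$, differentiate at $t = 0$, and collect the terms; the derivative should come out as a strictly negative quantity (a combination of $-1/f_{x_n}[\tilde y_n]$ and the reciprocals $1/f_{x_n}[j]$ weighted by the marginal), confirming that the perturbed point attains a strictly smaller value and therefore $f$ is not locally optimal. The main obstacle I anticipate is handling the bookkeeping cleanly when $\tilde y_n$ itself has nonzero marginal weight $\PP(\widetilde Y = \tilde y_n \mid \widetilde D)$, so that the coefficient on $\ln f_{x_n}[\tilde y_n]$ is $-1 + \PP(\widetilde Y = \tilde y_n \mid \widetilde D) < 0$; I must verify the net sign remains negative, which holds because the marginal weight is strictly less than $1$ (assuming the noisy label distribution is genuinely supported on more than one class) and the divergent contribution from the shrinking coordinate dominates in any case.
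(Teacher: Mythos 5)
Your proposal is correct, and your directional-derivative computation does establish the claim: writing the per-sample loss as $-\ln f_{x_n}[\tilde y_n] + \sum_{j} \PP(\widetilde Y = j\mid\widetilde D)\ln f_{x_n}[j]$ and shifting mass $t$ from a coordinate $i\neq\tilde y_n$ onto $\tilde y_n$ gives derivative $\bigl(\PP(\widetilde Y=\tilde y_n\mid\widetilde D)-1\bigr)/f_{x_n}[\tilde y_n] - \PP(\widetilde Y=i\mid\widetilde D)/f_{x_n}[i] < 0$ at any interior point (provided the noisy labels are not all one class, a degenerate case in which $\ell_{\text{CA}}\equiv 0$ and the theorem is vacuous; the paper silently excludes this too). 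However, your route is genuinely different from, and considerably more elementary than, the paper's. The paper does not argue on the simplex at a single sample: it works with the aggregate loss in expectation over $\mathcal D$, rewrites it as $-\int dx\sum_y \ln f_x[y]\,[\PP(x,y)-\PP(x)\PP(y)]$, and then analyzes one step of SGD under explicit assumptions (infinite model capacity, smooth non-singular derivatives of the network function). A separate lemma computes the induced variation $\Delta_y(x)$ of the softmax outputs through the chain rule, and the proof exhibits a sign assignment of $\Delta_y(x_0)$ on a small neighbourhood $B_\epsilon(x_0)$ — positive where $\PP(y|x_0)>\PP(y)$, negative otherwise, rescaled to respect $\sum_y\Delta_y=0$ — that strictly decreases the objective, so no interior point is a steady state of the dynamics. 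What your approach buys is a clean, assumption-light, first-order argument that matches the literal statement of the theorem (a property of $\ell_{\text{CA}}$ as a function of the probability vector $f_{x_n}$); your chosen perturbation is in fact a special case of the paper's sign assignment. What the paper's approach buys is that the decreasing direction is actually realizable by a parameter update of the softmax-parametrized network and that the aggregate training objective, not just one summand, decreases. If you want your argument to yield non-optimality in parameter space, you should add one sentence noting that at an interior point the softmax Jacobian maps onto the tangent space of the simplex, so your feasible direction is attainable by perturbing the logits — this is exactly the content of the paper's Lemma on $\Delta_y(x)$ together with its capacity and smoothness assumptions.
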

See Appendix \ref{proof:dynamic} 
for the proof. Particularly, in binary cases, we have constraint $f(x_{n})[0] + f(x_{n})[1] = 1$. Following Theorem \ref{pro:dynamic}, we know minimizing $\ell_{\text{CA}}(f(x_n),\tilde y_{n})$ w.r.t $f$ under this constraint leads to either $f(x_{n})[0] \rightarrow 1$ or $f(x_{n})[1] \rightarrow 1$, indicating confident predictions. 
Therefore, the addition of term $-\E_{\mathcal{D}_{\widetilde{Y}|\widetilde{D}}}[\ell(f(x_n),\widetilde Y)] $ helps improve the confidence of the learned classifier. 
Inspired by the above observation, we define the following confidence regularizer:
\vspace{-2pt}
\begin{tcolorbox}[colback=grey!5!white,colframe=grey!5!white]
\vspace{-10pt}
\[\textbf{Confidence Regularizer:}~~~~
    \ell_{\text{CR}}(f(x_n)) := -\beta \cdot \mathbb{E}_{\mathcal{D}_{\widetilde{Y}|\widetilde{D}}}[\ell(f(x_n), \widetilde{Y})],
\]\vspace{-18pt}
\end{tcolorbox}
\vspace{-2pt}
where $\beta$ is positive and $\ell(\cdot)$ refers to the CE loss.
The prior probability $\PP(\widetilde Y|\widetilde{D})$ is counted directly from the noisy dataset.
In the remaining of this paper, $\ell(\cdot)$ indicates the CE loss by default.

\textbf{Why are confident predictions important?}
 Intuitively, when model \rev{fits to the label noise}, its predictions often become less confident, since the noise usually corrupts the signal encoded in the clean data.
From this perspective, encouraging confident predictions plays against \rev{fitting} to label noise. Compared to instance-independent noise, the difficulties in estimating the instance-dependent noise rates largely prevent us from applying existing techniques.
In addition, as shown by \cite{manwani2013noise}, the 0-1 loss function is more robust to instance-based noise but hard to optimize with. To a certain degree, pushing confident predictions results in a differentiable loss function that approximates the 0-1 loss, and therefore restores the robustness property.
\rev{Besides, as observed by \cite{chatterjee2020coherent} and \cite{zielinski2020explaining}, gradients from similar examples would reinforce each other.
When the overall label information is dominantly informative that $T_{ii}(X) > T_{ij}(X)$, DNNs will receive more correct information statistically.
Encouraging confident predictions would discourage the memorization of the noisy examples (makes it hard for noisy labels to reduce the confidence of predictions), and therefore further facilitate DNNs to learn the (clean) dominant information.}

\textbf{$\ell_{\text{CR}}$ is NOT the entropy regularization}
Entropy regularization (ER) is a popular choice for improving confidence of the trained classifiers in the literature \citep{tanaka2018joint,yi2019probabilistic}.
Given a particular prediction probability $p$ for a class, the ER term is based on the function $-p\ln p$, while our $\ell_{\text{CR}}$ is built on $\ln p$.
Later we show $\ell_{\text{CR}}$ offers us favorable theoretical guarantees for training with instance-dependent label noise, while ER does not. In Appendix 
\ref{supp:compare}, we present both theoretical and experimental evidences that $\ell_{\text{CR}}$ serves as a better regularizer compared to ER. %

\subsection{Confidence Regularized Sample Sieve}\label{sec_2}

Intuitively, label noise misleads the training thus sieving corrupted examples out of datasets is beneficial.
Furthermore, label noise introduces high variance during training even with the existence of $\ell_{\text{CR}}$ (discussed in Section \ref{sec:discussSieve}).
Therefore, rather than accomplishing training solely with $\ell_{\text{CR}}$, we will first leverage its regularization power to design an efficient sample sieve. Similar to a general sieving process in physical words that compares the size of particles with the aperture of a sieve, we evaluate the ``size'' (quality, or a regularized loss) of examples and compare them with some to-be-specified thresholds, therefore the name sample sieve.
In our formulation, the regularized loss $\ell(f(x_n), \tilde y_n) +  \ell_{\text{CR}}(f(x_n))$ is employed to evaluate examples and $\alpha_n$ is used to specify thresholds.
Specifically, we aim to solve the sample sieve problem in (\ref{Eq:phase1_prob}).

\begin{figure}[!h]
  \begin{minipage}{0.51\linewidth} 
\begin{small}
  \begin{tcolorbox}[colback=grey!5!white,colframe=grey!5!white]
\begin{center} 
Confidence Regularized Sample Sieve
\end{center}
\begin{equation}\label{Eq:phase1_prob}
  \hspace{-1cm}
\begin{split}
    \min_{f\in\mathcal F, \atop \bm v \in \{0,1\}^N } &   \sum_{n\in[N]} v_n\left[  \ell(f(x_n), \tilde y_n) +  \ell_{\text{CR}}(f(x_n)) - \alpha_n\right]  \\
    \text{s.t.} 
    \quad & \ell_{\text{CR}}(f(x_n)) := - \beta \cdot \E_{ {\mathcal D}_{\widetilde Y|\widetilde{D}}} \ell(f(x_{n}),\widetilde Y),\\
    \quad & \alpha_n := \frac{1}{K}\sum_{\tilde{y}\in[K]}\ell(\bar f(x_{n}),\tilde y)   +  \ell_{\text{CR}}(\bar f(x_n)).
\end{split}    
\end{equation}
\vspace{-10pt}
\end{tcolorbox}
\end{small}
\end{minipage} 
\hfill
  \begin{minipage}{0.48\linewidth} 
    \centering
    \includegraphics[width = .9\textwidth]{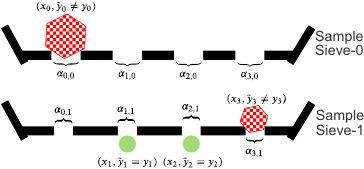}
    \vspace{-10pt}
    \caption{Dynamic sample sieves. Green circles are clean examples. Red hexagons are corrupted examples.
    \label{fig:samplesieve}  
    }
  \end{minipage}%
\vspace{-10pt}
\end{figure}

The crucial components in (\ref{Eq:phase1_prob}) are: 
\squishlist
\item $v_n \in \{0,1\}$ indicates whether example $n$ is clean ($v_n = 1$) or not ($v_n = 0$); %
\item $\alpha_n$ (mimicking the aperture of a sieve) controls which example should be sieved out;
\item $\bar f$ is a copy of $f$ and does not contribute to the back-propagation. $\mathcal F$ is the search space of $f$. 
\squishend

\vspace{-10pt}
\paragraph{Dynamic sample sieve} 
The problem in (\ref{Eq:phase1_prob}) is a combinatorial optimization which is hard to solve directly.
A standard solution to (\ref{Eq:phase1_prob}) is to apply alternate search iteratively as follows:
\vspace{-2pt}
\begin{tcolorbox}[colback=grey!5!white,colframe=grey!5!white]
\vspace{-7pt}
\noindent $\bullet$  Starting at $t=1$, $v_n^{(0)} = 1, \forall n \in [N]$.\\
\noindent $\bullet$ Confidence-regularized model update  (at iteration-$t$):
\begin{equation}\label{Eq:pr_update}
\begin{split}
    f^{(t)} =  & \argmin_{f \in \mathcal F}  \sum_{n\in[N]} v_n^{(t-1)}\left[ \ell(f(x_n), \tilde y_n) +  \ell_{\text{CR}}(f(x_n)) \right] ;
\end{split}    
\end{equation}

\noindent $\bullet$ Sample sieve  (at iteration-$t$):
\begin{equation}\label{Eq:data_sel}
    \begin{split}
    v_n^{(t)} & = \BR{(\ell(f^{(t)}(x_n), \tilde y_n) + \ell_{\text{CR}}(f^{(t)}(x_n))<\alpha_{n,t})},
\end{split}
\end{equation}
where $\alpha_{n,t} = \frac{1}{K}\sum_{\tilde{y}\in[K]}\ell(\bar{f}^{(t)}(x_n),\tilde{y}) + \ell_{\text{CR}}(\bar{f}^{(t)}(x_n))$, $f^{(t)}$ and $v^{(t)}$ refer to the specific classifier and  weight at iteration-$t$.
\rev{Note the values of $\ell_{\text{CR}}(\bar{f}^{(t)}(x_n))$ and $\ell_{\text{CR}}(f^{(t)}(x_n))$ are the same. We keep both terms to be consistent with the objective in Eq.~(\ref{Eq:phase1_prob}).}
In DNNs, we usually update model $f$ with one or several epochs of data instead of completely solving (\ref{Eq:pr_update}).
\vspace{-7pt}
\end{tcolorbox}
\vspace{-2pt}
Figure \ref{fig:samplesieve} illustrates the dynamic sample sieve, where the size of each example corresponds to the regularized loss and the aperture of a sieve is determined by $\alpha_{n,t}$. 
In each iteration-$t$, sample sieve-$t$ ``blocks'' some corrupted examples by comparing a regularized example loss with a closed-form threshold $\alpha_{n,t}$, which can be immediately obtained given current model $\bar f^{(t)}$ and example $(x_n,\tilde y_n)$ (no extra estimation needed).
In contrast, most sample selection works \citep{han2018co,yu2019does,wei2020combating} focus on controlling the number of the selected examples using an intuitive function \rev{where the overall noise rate may be required, or directly selecting examples by an empirically set threshold \citep{zhang2018generalized}.
Intuitively, the specially designed thresholds $\alpha_{n,t}$ for each example should be more accurate than a single threshold for the whole dataset.}
Besides, the goal of existing works is often to select clean examples while our sample sieve focuses on removing the corrupted ones. \rev{On a high level, we follow a different philosophy from these sample selection works.}
We coin our solution as COnfidence REgularized Sample Sieve (\SPL{}).

\textbf{More visualizations of the sample sieve}~
In addition to Figure~\ref{fig:samplesieve}, we visualize the superiority of our sample sieve with numerical results as Figure~\ref{fig_dis}.
The sieved dataset is in the form of two clusters of examples.
Particularly, from Figure~\ref{fig_dis}(b) and Figure~\ref{fig_dis}(f), we observe that CE suffers from providing a good division of clean and corrupted examples due to overfitting in the final stage of training.
On the other hand, with $\ell_{\text{CR}}$, there are two distinct clusters and can be separated by the threshold $0$ as shown in Figure~\ref{fig_dis}(d) and Figure~\ref{fig_dis}(h).
Comparing Figure~\ref{fig_dis}(a)-\ref{fig_dis}(d) with Figure~\ref{fig_dis}(e)-\ref{fig_dis}(h), we find the effect of instance-dependent noise on training is indeed different from the symmetric one, where the instance-dependent noise is more likely to cause overfitting. 

\begin{figure}[!t]
	\centering
	\includegraphics[width=0.9\textwidth]{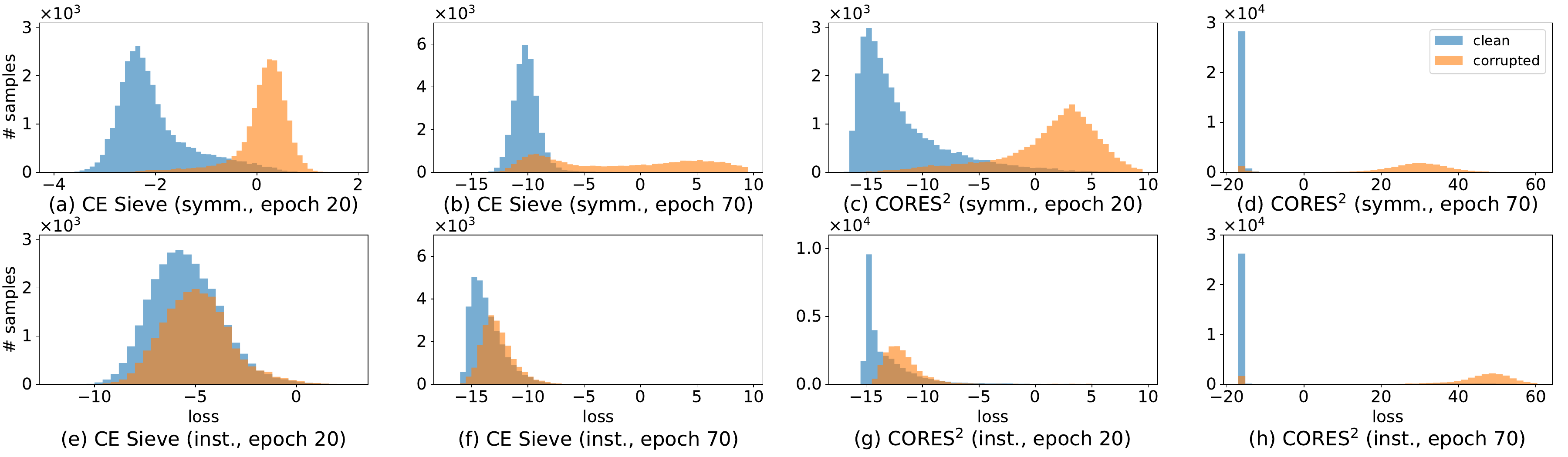}
	\vspace{-5pt}
	\caption{Loss distributions of training on CIFAR-10 with 40\% symmetric noise (symm.) or 40\% instance-based noise (inst.). The loss is given by $\ell(f^{(t)}(x_n), \tilde y_n) + \ell_{\text{CR}}(f^{(t)}(x_n))-\alpha_{n,t}$ as (\ref{Eq:data_sel}).
	\rev{CE Sieve represents the dynamic sample sieve with standard cross-entropy loss (without CR).}
	}
	\label{fig_dis}
\vspace{-12pt}
\end{figure}

\vspace{-5pt}
\section{Theoretical Guarantees of \SPL{}}\label{sec:thm}
\vspace{-5pt}
In this section, we theoretically show the advantages of \SPL{}. %
The analyses focus on showing \SPL{} guarantees a quality division, i.e. $v_n = \BR(y_n = \tilde y_n), \forall n$, with a properly set $\beta$. %
To show the effectiveness of this solution, we call a model prediction on $x_n$ is \emph{better than random guess} if $f_{x_n}[y_n] > 1/K$, and call it \emph{confident} if $f_{x_n}[y] \in \{0,1\}, \forall y \in [K]$, where $y_n$ is the clean label and $y$ is an arbitrary label.
The quality of sieving out corrupted examples is guaranteed in Theorem \ref{Pro:alpha_general}. 
\begin{thm}\label{Pro:alpha_general}
The sample sieve defined in (\ref{Eq:data_sel}) ensures that clean examples $(x_n,\tilde y_n = y_n)$ will not be identified as being corrupted if the model $f^{(t)}$'s %
prediction on $x_n$ is better than random guess.  
\end{thm}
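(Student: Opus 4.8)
The plan is to show directly that the sieve indicator $v_n^{(t)}$ defined in (\ref{Eq:data_sel}) evaluates to $1$ under the stated hypotheses, since by the convention $v_n=1$ (clean) / $v_n=0$ (corrupted) this is exactly the assertion that a clean sample is not flagged as corrupted. By definition $v_n^{(t)}=1$ iff $\ell(f^{(t)}(x_n),\tilde y_n)+\ell_{\text{CR}}(f^{(t)}(x_n))<\alpha_{n,t}$, so I would begin by substituting the closed form of $\alpha_{n,t}$ and using that $\bar f^{(t)}$ is a (non-differentiated) copy of $f^{(t)}$, hence $\bar f^{(t)}(x_n)=f^{(t)}(x_n)$. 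The crucial simplification is then that the confidence regularizer contributes the \emph{same} quantity $\ell_{\text{CR}}(f^{(t)}(x_n))$ to both the loss side and the threshold side, so it cancels. Using $\tilde y_n=y_n$, the claim reduces to the purely unregularized inequality
\[
\ell(f^{(t)}(x_n),y_n) < \frac{1}{K}\sum_{\tilde y\in[K]}\ell(f^{(t)}(x_n),\tilde y).
\]

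Next I would substitute the cross-entropy form $\ell(f(x),y)=-\ln f_x[y]$ and rewrite the target, after negating and flipping the inequality, as $\ln f^{(t)}_{x_n}[y_n] > \frac{1}{K}\sum_{\tilde y\in[K]}\ln f^{(t)}_{x_n}[\tilde y]$. The right-hand side is the arithmetic mean of the log-probabilities; since $\sum_{\tilde y\in[K]} f^{(t)}_{x_n}[\tilde y]=1$, Jensen's inequality (concavity of $\ln$) bounds it above by $\ln\!\big(\tfrac{1}{K}\sum_{\tilde y\in[K]} f^{(t)}_{x_n}[\tilde y]\big)=\ln(1/K)=-\ln K$. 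Meanwhile the ``better than random guess'' hypothesis $f^{(t)}_{x_n}[y_n]>1/K$, together with monotonicity of $\ln$, gives $\ln f^{(t)}_{x_n}[y_n]>-\ln K$. Chaining the two bounds yields the desired strict inequality; note the strictness comes from the hypothesis alone, so no tightness analysis of Jensen's step is required.

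I do not anticipate a genuine technical obstacle: the argument is essentially a one-line Jensen estimate once the regularizer cancels. The only points that need care are (i) reading $\alpha_{n,t}$ correctly so that $\ell_{\text{CR}}$ really does appear symmetrically and cancels, and (ii) the boundary case in which $f^{(t)}_{x_n}[\tilde y]=0$ for some $\tilde y\neq y_n$ (as can happen for a confident prediction). In that case the mean of the logs is $-\infty$ while the left-hand side $\ln f^{(t)}_{x_n}[y_n]$ is finite because $f^{(t)}_{x_n}[y_n]>1/K>0$, so the inequality still holds trivially. With these two observations the theorem follows.
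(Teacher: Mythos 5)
Your proof is correct and follows essentially the same route as the paper's: the regularizer contributes identically to the loss and to the threshold $\alpha_{n,t}$ (since $\bar f^{(t)}$ is a copy of $f^{(t)}$) and so cancels, and Jensen's inequality applied to the concave logarithm reduces the sieve condition to $f^{(t)}_{x_n}[y_n]>1/K$. The only cosmetic difference is that you apply Jensen to the mean over all $K$ classes, obtaining the bound $-\ln K$ and chaining it with the hypothesis, whereas the paper first isolates the $K-1$ incorrect labels and applies Jensen there to compare $-\ln(f_{x_n}[y_n])$ with $-\ln\bigl(\tfrac{1-f_{x_n}[y_n]}{K-1}\bigr)$; both yield the identical sufficient condition.
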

 
\vspace{-5pt}
Theorem \ref{Pro:alpha_general} informs us that our sample sieve can progressively and safely filter out corrupted examples, and therefore improves division quality, when the model prediction on each $x_n$ is better than random guess. The full proof is left to Appendix 
\ref{proof:thm1}.
In the next section, we provide evidences that our trained model is guaranteed to achieve this requirement with sufficient examples.

\vspace{-5pt}
\subsection{Decoupling the Confidence Regularized Loss}
\vspace{-3pt}
The discussion of performance guarantees of the sample sieve %
focuses on a general instance-based noise transition matrix $T(X)$, which can induce any specific noise regime such as symmetric noise and asymmetric noise \citep{kim2019nlnl,Li2020DivideMix}.
Note the feature-independency was one critical assumption in state-of-the-art theoretically guaranteed noise-resistant literatures \citep{natarajan2013learning,liu2019peer,xu2019l_dmi} while we \emph{do not} require.
Let $T_{ij}:=\E_{{\mathcal D|Y=i}} [T_{ij}(X)], \forall i,j\in[K]$.
Theorem \ref{Thm:unbiased} explicitly shows the contributions of clean examples, corrupted examples, and $\ell_{\text{CR}}$ during training. See Appendix 
\ref{proof:thm2} for the proof.

\begin{thm}\label{Thm:unbiased}
(Main Theorem: Decoupling the Expected Regularized CE Loss)
In expectation, the loss with $\ell_{\text{CR}}$ can be decoupled as three separate additive terms:
\vspace{-3pt}
\begin{equation}\label{Eq:newloss_exp}
\begin{split}
    & \mathbb{E}_\mathcal{\widetilde{D}}\left[\ell(f(X), \widetilde{Y})+\ell_{\text{CR}}(f(X))\right] =  \overbrace{\underline{T} \cdot \mathbb{E}_{\mathcal D}[\ell(f(X), Y)]}^{\text{\emph{Term-1}}} + \overbrace{\bar \Delta \cdot \mathbb{E}_{\mathcal D_{\Delta}}[\ell(f(X), Y)]}^{\text{\emph{Term-2}}}  \\
    & + \underbrace{\sum_{j \in [K]} \sum_{i\in[K]}
   \PP(Y=i)\mathbb{E}_{{\mathcal D}|Y=i}[(U_{ij}(X) - \beta  \PP(\widetilde Y = j))\ell(f(X), j)]}_{\text{\emph{Term-3}}},
\end{split}
\end{equation}
where $\underline{T} := \min_{j\in[K]} ~ T_{jj}, ~~ \bar \Delta := \sum_{j\in [K]} \Delta_j \PP(Y=j),~~ \Delta_j:= T_{jj} - \underline{T}$, $U_{ij}(X) = T_{ij}(X), \forall i\ne j, U_{jj}(X) = T_{jj}(X) - T_{jj}$,
and {\small $\mathbb{E}_{\mathcal D_{\Delta}}[\ell(f(X), Y)]:=  
            \BR{(\bar \Delta > 0)}\sum_{j \in [K]} \frac{\Delta_{j}\mathbb P(Y=j)}{\bar \Delta}  \mathbb{E}_{\mathcal D |{Y}=j}[\ell(f(X), j)]$}.
\end{thm}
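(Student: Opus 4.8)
The plan is to evaluate the left-hand side directly by pushing both expectations onto the clean distribution $\mathcal D$ through the transition matrix $T(X)$, and then to perform a purely algebraic regrouping that isolates the three advertised terms. First I would condition on the clean pair $(X,Y)$: since $\widetilde Y$ is generated from $Y$ via $\PP(\widetilde Y = j \mid Y=i, X) = T_{ij}(X)$, the noisy CE term expands as
\[
\E_{\widetilde{\mathcal D}}[\ell(f(X),\widetilde Y)] = \sum_{i\in[K]} \PP(Y=i)\, \E_{\mathcal D|Y=i}\Big[\textstyle\sum_{j\in[K]} T_{ij}(X)\, \ell(f(X),j)\Big].
\]
For the regularizer I would use that $\ell_{\text{CR}}(f(X))$ depends on $X$ only and that label noise leaves the feature marginal unchanged, so $\E_{\widetilde{\mathcal D}}[\ell_{\text{CR}}(f(X))] = \E_{\mathcal D_X}[\ell_{\text{CR}}(f(X))] = -\beta \sum_{j}\PP(\widetilde Y=j)\, \E_{\mathcal D_X}[\ell(f(X),j)]$, and then rewrite $\E_{\mathcal D_X}[\ell(f(X),j)] = \sum_i \PP(Y=i)\,\E_{\mathcal D|Y=i}[\ell(f(X),j)]$ to bring it onto the same conditional form.

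The core of the argument is the regrouping of the double sum. I would separate the diagonal $j=i$ contribution from the off-diagonal $j\neq i$ contribution, and on the diagonal use the decomposition $T_{ii}(X) = \underline T + \Delta_i + U_{ii}(X)$, which follows directly from $\underline T = \min_j T_{jj}$, $\Delta_i = T_{ii}-\underline T$, and $U_{ii}(X)=T_{ii}(X)-T_{ii}$. The constant piece $\underline T$ collapses to $\underline T\cdot \E_{\mathcal D}[\ell(f(X),Y)]$, giving Term-1; the piece carrying $\Delta_i$ yields $\sum_i \Delta_i \PP(Y=i)\, \E_{\mathcal D|Y=i}[\ell(f(X),i)]$, which I would match to Term-2. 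Finally the diagonal fluctuation $U_{ii}(X)$ together with the off-diagonal terms (where $U_{ij}(X)=T_{ij}(X)$) recombine into a single sum $\sum_j\sum_i \PP(Y=i)\,\E_{\mathcal D|Y=i}[U_{ij}(X)\,\ell(f(X),j)]$, and absorbing the regularizer contribution $-\beta\PP(\widetilde Y=j)$ (a constant w.r.t.\ the expectation) into the bracket produces exactly Term-3.

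The one point that needs care — and the only real subtlety beyond bookkeeping — is reconciling the $\Delta$-piece with the stated Term-2, which carries the indicator $\BR(\bar\Delta>0)$ and the normalisation by $\bar\Delta=\sum_j \Delta_j\PP(Y=j)$. I would verify that $\bar\Delta\cdot\E_{\mathcal D_\Delta}[\ell(f(X),Y)]$ equals $\BR(\bar\Delta>0)\sum_j \Delta_j\PP(Y=j)\,\E_{\mathcal D|Y=j}[\ell(f(X),j)]$ and coincides with the raw $\Delta$-piece: when $\bar\Delta>0$ the normalising constants cancel and the two agree, while when $\bar\Delta=0$ each $\Delta_j\geq 0$ (since $\underline T$ is the minimum diagonal) forces $\Delta_j\PP(Y=j)=0$ for every $j$, so the raw piece also vanishes and the indicator consistently kills Term-2. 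Checking that $\{\Delta_j\PP(Y=j)/\bar\Delta\}_j$ is a genuine probability vector (nonnegative, summing to one) confirms $\mathcal D_\Delta$ is a well-defined mixture. The remaining steps are routine rearrangements of finite sums and expectations, so no convergence or measurability concerns arise.
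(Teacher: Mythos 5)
Your proposal is correct and follows essentially the same route as the paper's proof: expand both the noisy CE term and the regularizer onto the clean conditional distributions via $T(X)$, then regroup the double sum into the three advertised terms. The only differences are cosmetic and in your favor — you decompose $T_{ii}(X)=\underline T+\Delta_i+U_{ii}(X)$ pointwise and regroup directly, whereas the paper first splits $T_{ij}(X)$ into its conditional mean plus a covariance term and then unwinds that covariance again to arrive at the same $U_{ij}(X)$ expression; you also explicitly verify the degenerate case $\bar\Delta=0$ and that $\mathcal D_\Delta$ is a well-defined mixture, which the paper leaves implicit.
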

Equation (\ref{Eq:newloss_exp}) provides a generic machinery for anatomizing noisy datasets, where we show the effects of instance-based label noise on the $\ell_{\text{CR}}$ regularized loss can be decoupled into three additive terms: %
\textbf{Term-1} reflects the expectation of CE on clean distribution $\mathcal D$, \textbf{Term-2} %
shifts the clean distribution by changing the prior probability of $Y$, and \textbf{Term-3} characterizes how the corrupted examples (represented by $U_{ij}(X)$) might mislead/mis-weight the loss, as well as the regularization ability of $\ell_{\text{CR}}$ (represented by $\beta\PP(\widetilde Y=j)$). 
In addition to the design of sample sieve, this additive decoupling structure also provides a novel and promising perspective for understanding and controlling the effects of generic instance-dependent label noise.

\vspace{-2pt}
\subsection{Guarantees of the Sample Sieve}\label{sec:prove_alpha}
\vspace{-2pt}

By decoupling the effects of instance-dependent noise into separate additive terms as shown in Theorem \ref{Thm:unbiased}, we can further study under what conditions, minimizing the confidence regularized CE loss on the \emph{(instance-dependent) noisy} distribution will be equivalent to minimizing the true loss incurred on the \emph{clean} distribution, which is exactly encoded by Term-1. In other words, we would like to understand when Term-2 and Term-3 in (\ref{Eq:newloss_exp}) can be controlled not to disrupt the minimization of Term-1.
Our next main result establishes this guarantee but will first need the following two assumptions.
\begin{ap}\label{Ap1} ($Y^{*} = Y$) Clean labels are Bayes optimal ($Y^{*}:= \argmax_{i\in[K]} \PP(Y=i|X)$). %
\end{ap}

\begin{ap}\label{Ap3}(Informative datasets) The noise rate is bounded as $ T_{ii}(X) - T_{ij}(X) >  \rev{0}, \forall i\in[K], j\in[K], j\ne i, X\sim \mathcal D_X$.
\end{ap}

\textbf{Feasibility of assumptions:}
1) Note for many popular image datasets, e.g. CIFAR, the label of each feature is well-defined and the corresponding distribution is well-separated by human annotation.
In this case, each feature $X$ only belongs to one particular class $Y$. 
Thus Assumption~\ref{Ap1} is generally held in classification problems \citep{liu2015classification}.  %
Technically, this assumption could be relaxed. We use this assumption for clean presentations.
2) Assumption \ref{Ap3} shows the requirement of noise rates, i.e., for any feature $X$, a sufficient number of clean examples are necessary for dominant clean information. \rev{For example, 
we require $ T_{ii}(X) - T_{ij}(X) > 0$ to ensure examples from class $i$ are informative \citep{liu2017machine}.}

Before formally presenting the noise-resistant property of training with $\ell_{\text{CR}}$, we discuss intuitions here. 
As discussed earlier in Section \ref{sec_1}, our $\ell_{\text{CR}}$ regularizes the CE loss to generate/incentivize confident prediction, and thus is able to approximate the 0-1 loss to obtain its robustness property. %
More explicitly, from (\ref{Eq:newloss_exp}), $\ell_{\text{CR}}$ affects Term-3 with a scale parameter $\beta$.
Recall that $U_{ij}(X)=T_{ij}(X), \forall i\ne j$, which is exactly the noise transition matrix.
Although we have \emph{no information} about this transition matrix, the confusion brought by $U_{ij}(X)$ can be canceled or reversed by a sufficiently large $\beta$ such that $U_{ij}(X) - \beta  \PP(\widetilde Y = j) \le 0$. 
\rev{Intuitively, with an appropriate $\beta$, all the effects of $U_{ij}(X), i\ne j$ can be reversed, and we will get a negative loss punishing the classifier for predicting class-$j$ when the clean label is $i$.}
Formally, Theorem \ref{col:beta} shows the noise-resistant property of training with $\ell_{\text{CR}}$ and is proved in Appendix \ref{proof:cor2}.

\begin{thm}\label{col:beta}
(Robustness of the Confidence Regularized CE Loss)
With Assumption \ref{Ap1} and \ref{Ap3}, when
{\small\begin{align}\label{eqn:beta}\max_{i,j\in [K],X \sim \mathcal D_X}~ \frac{{U_{ij}}(X)}{\PP(\widetilde Y=j)} \le 
\beta 
\le  \min_{\PP(\widetilde Y = i) > \PP(\widetilde Y = j), X\sim \mathcal D_X} \frac{  \rev{T_{ii}(X) - T_{ij}(X)} }{\PP(\widetilde Y = i) - \PP(\widetilde Y = j)},
\end{align}}
minimizing $\E_{\widetilde{\mathcal D}}[\ell(f(X),\widetilde{Y}) + \ell_{\text{CR}}(f(X))]$ is equivalent to minimizing $\E_{{\mathcal D}}[\ell(f(X),{Y})]$.
\end{thm}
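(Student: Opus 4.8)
The plan is to push everything through the additive decoupling of Theorem~\ref{Thm:unbiased} and then argue, term by term, that the confidence-regularized \emph{noisy} objective is driven to the same solution as the \emph{clean} objective. Write $f^{\star}$ for the confident, Bayes-correct classifier, i.e.\ $f^{\star}_X[Y]=1$ at the (by Assumption~\ref{Ap1}, deterministic) clean label of $X$; this is exactly the unique minimizer of $\E_{\mathcal D}[\ell(f(X),Y)]$, with value $0$. It therefore suffices to show that, under~(\ref{eqn:beta}), $f^{\star}$ also minimizes $\E_{\widetilde{\mathcal D}}[\ell(f(X),\widetilde Y)+\ell_{\text{CR}}(f(X))]$ and that nothing beats it. I would work with the per-feature coefficients obtained by collecting~(\ref{Eq:newloss_exp}): for a feature $X$ whose clean label is $i$, the objective contributes $\sum_{j\in[K]}c_j(X)\,\ell(f(X),j)$, where $c_i(X)=T_{ii}(X)-\beta\,\PP(\widetilde Y=i)$ and $c_j(X)=T_{ij}(X)-\beta\,\PP(\widetilde Y=j)$ for $j\neq i$.

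First I would dispose of the clean part. Term-1 and Term-2 in~(\ref{Eq:newloss_exp}) are nonnegative multiples of cross-entropy against the clean and the prior-shifted clean distributions, so each is minimized with value $0$ exactly at $f^{\star}$; in particular Term-1$+$Term-2 already singles out $f^{\star}$. The lower bound in~(\ref{eqn:beta}), $\beta\ge\max_{i,j,X}U_{ij}(X)/\PP(\widetilde Y=j)$, forces $U_{ij}(X)-\beta\,\PP(\widetilde Y=j)\le 0$ for all $i,j,X$, so every coefficient of Term-3 is non-positive. This is the precise sense in which $\ell_{\text{CR}}$ cancels the confusion injected by the transition matrix: Term-3 stops penalizing, and instead rewards, confident predictions, consistent with the confidence-promoting behavior of Theorem~\ref{pro:dynamic}. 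I would invoke that theorem to restrict attention to confident solutions (non-confident $f$ are not locally optimal), reducing the problem to deciding, for each feature, which class $f_X$ should concentrate on.

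The upper bound pins that class to the correct one. Comparing $f^{\star}$ against a competitor that swaps the confident predictions of classes $i$ and $j$ on their respective clean supports, the margin that decides the comparison should be
$$ c_i(X)-c_j(X)\ \ge\ T_{ii}-T_{jj}\qquad \forall X,\ \forall j\neq i, $$
where the offset $T_{ii}-T_{jj}=\Delta_i-\Delta_j$ is exactly the differential weight Term-2 assigns to classes $i$ and $j$. Substituting $c_i(X),c_j(X)$ and rearranging turns this into $\beta\,(\PP(\widetilde Y=i)-\PP(\widetilde Y=j))\le T_{jj}-T_{ii}+T_{ii}(X)-T_{ij}(X)$, which is automatic when $\PP(\widetilde Y=i)\le\PP(\widetilde Y=j)$ and otherwise is exactly the upper bound of~(\ref{eqn:beta}). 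Assumption~\ref{Ap3} makes the right-hand side $T_{jj}-T_{ii}+T_{ii}(X)-T_{ij}(X)$ strictly positive, so the admissible $\beta$-interval is nonempty and contains positive values; this is where dataset informativeness enters. With both bounds in force, each feature's non-positive, confidence-promoting Term-3 together with the clean Term-1$+$Term-2 is optimized by concentrating on the true class, so the global minimizer is $f^{\star}$, yielding the claimed equivalence.

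The hard part will be the divergence of cross-entropy at the simplex vertices: the ``which class wins'' comparison pits quantities that are individually $-\infty$ (through the non-positive Term-3) against $+\infty$ (through the clean terms on a mispredicted feature), so the margin computation that produces the offset $T_{ii}-T_{jj}$ and the threshold above must be carried out as a controlled limit --- e.g.\ along truncated-confidence classifiers --- verifying that the clean blow-up strictly dominates whenever $f$ concentrates on a wrong class. A secondary subtlety is justifying the reduction to confident classifiers \emph{uniformly} over features so that Theorem~\ref{pro:dynamic} applies feature-wise, and handling the degenerate boundary $\PP(\widetilde Y=i)=\PP(\widetilde Y=j)$ where the bound is tight.
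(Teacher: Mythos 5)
Your proposal follows essentially the same route as the paper's proof: the decomposition of Theorem~\ref{Thm:unbiased}, the lower bound on $\beta$ forcing every Term-3 coefficient $U_{ij}(X)-\beta\,\PP(\widetilde Y=j)$ to be non-positive (hence confident predictions), and a comparison between confident classifiers whose margin condition rearranges to exactly the stated upper bound. The one step you assert rather than derive --- that the deciding margin is $c_i(X)-c_j(X)\ge T_{ii}-T_{jj}$ rather than $\ge 0$ --- is precisely the step the paper also treats tersely (it attributes an increase of $T_{j_2 j_2}\ell_0$ to Term-1 plus Term-2 under a single confident misprediction and works with the symbolic ``regret'' $\ell_0$), so your explicit flagging of the $\pm\infty$ bookkeeping at the simplex vertices identifies a genuine looseness shared by the paper rather than a defect unique to your argument.
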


Theorem \ref{col:beta} shows a sufficient condition of $\beta$ for our confidence regularized CE loss to be robust to instance-dependent label noise. The bound on LHS ensures the confusion from label noise could be canceled or reversed by the $\beta$ weighted confidence regularizer, and the RHS bound guarantees the model with the minimized regularized loss predicts the most frequent label in each feature w.p. 1.

Theorem \ref{col:beta} also provides guidelines for tuning $\beta$. Although we have no knowledge about $T_{ij}(X)$, we can roughly estimate the range of possible $\beta$.
One possibly good setting of $\beta$ is linearly increasing with the number of classes, e.g. $\beta=2$ for $10$ classes and $\beta=20$ for $100$ classes.

With \emph{infinite model capacity}, minimizing $\E_{{\mathcal D}}[\ell(f(X),{Y})]$ returns the Bayes optimal classifier (since CE is a calibrated loss) which predicts on each $x_n$ better than random guess. Therefore, with a sufficient number of examples, minimizing $\E_{\widetilde{\mathcal D}}[\ell(f(X),\widetilde{Y}) + \ell_{\text{CR}}(f(X))]$ will also return a model that predicts better than random guess, then satisfying the condition required in Theorem \ref{Pro:alpha_general} to guarantee the quality of sieved examples. Further, since the Bayes optimal classifier always predicts clean labels confidently when Assumption \ref{Ap1} holds, Theorem \ref{col:beta} also guarantees confident predictions. 
With such predictions, the sample sieve in (\ref{Eq:data_sel}) will achieve $100\%$ precision on both clean and corrupted examples.
This guaranteed division is summarized in Corollary \ref{cor:division}:
\begin{cor}\label{cor:division}
When conditions in Theorem \ref{col:beta} hold, with infinite model capacity and sufficiently many examples, \SPL{} achieves $v_n = \BR(y_n = \tilde y_n), \forall n\in[N]$, i.e.,  all the sieved clean examples are effectively clean.
\end{cor}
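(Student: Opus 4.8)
The plan is to chain the population-level guarantee of Theorem~\ref{col:beta} together with the sieve-retention guarantee of Theorem~\ref{Pro:alpha_general}, and then supply the one direction neither of these covers, namely that corrupted samples are actively removed. First I would simplify the sieve rule itself. Because $\bar f^{(t)}$ is a value-copy of $f^{(t)}$, the regularizer $\ell_{\text{CR}}(f^{(t)}(x_n))$ appearing on the left-hand side of (\ref{Eq:data_sel}) and the $\ell_{\text{CR}}(\bar f^{(t)}(x_n))$ buried inside $\alpha_{n,t}$ are identical and cancel. Hence the sieve reduces to $v_n^{(t)}=\BR\big(\ell(f^{(t)}(x_n),\tilde y_n) < \tfrac{1}{K}\sum_{y\in[K]}\ell(f^{(t)}(x_n),y)\big)$: a sample is kept exactly when the loss of its observed label falls strictly below the average loss across all $K$ labels. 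This reformulation is what makes both cases transparent.

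Second I would pin down the classifier. By Theorem~\ref{col:beta}, under Assumptions~\ref{Ap1} and~\ref{Ap3} and the stated range of $\beta$, minimizing $\E_{\widetilde{\mathcal D}}[\ell(f(X),\widetilde Y)+\ell_{\text{CR}}(f(X))]$ coincides with minimizing $\E_{\mathcal D}[\ell(f(X),Y)]$. With infinite model capacity the cross-entropy minimizer is the Bayes optimal classifier, and Assumption~\ref{Ap1} ($Y^*=Y$) forces it to be both correct and confident on the clean label, i.e. $f_{x_n}[y_n]=1$ and $f_{x_n}[y]=0$ for $y\neq y_n$. The role of the ``sufficiently many samples'' hypothesis is to let a generalization/concentration argument transfer this population minimizer to the empirical minimizer that actually drives the sieve, so that on every $x_n$ the trained model is at least better than random guess and, in the limit, confident.

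Third I would read off the two cases from the simplified rule. For a clean sample ($\tilde y_n=y_n$) the prediction is better than random guess, so Theorem~\ref{Pro:alpha_general} immediately gives $v_n=1$; equivalently, $\ell(f(x_n),y_n)=0$ sits strictly below the positive average. For a corrupted sample ($\tilde y_n\neq y_n$) the observed label receives (near-)zero probability, so $\ell(f(x_n),\tilde y_n)$ is the largest of the $K$ per-label losses and therefore cannot lie below their average, forcing $v_n=0$. Combining the two cases yields $v_n=\BR(y_n=\tilde y_n)$ for all $n\in[N]$, which is the claim.

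The main obstacle I anticipate is making the corrupted-sample case fully rigorous, since a perfectly confident prediction assigns loss $+\infty$ to every wrong label and the comparison degenerates to the indeterminate $\infty<\infty$. I would resolve this with a limiting argument: parametrize the near-confident prediction by a small mass $\delta$ on each wrong class, observe that $\ell(f(x_n),\tilde y_n)=-\ln\delta$ while the threshold equals $\tfrac{K-1}{K}(-\ln\delta)+o(-\ln\delta)$, and note that the factor $\tfrac{K-1}{K}<1$ makes the observed-label loss strictly exceed the average for all sufficiently small $\delta$. The secondary technical point is precisely the passage from the population equivalence of Theorem~\ref{col:beta} to per-sample sieve decisions, which is exactly what the infinite-capacity and sufficiently-many-samples hypotheses are invoked to license.
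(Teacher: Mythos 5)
Your proposal is correct and follows essentially the same route as the paper: invoke Theorem~\ref{col:beta} plus infinite capacity and sufficient samples to obtain a confident Bayes-optimal classifier, apply Theorem~\ref{Pro:alpha_general} to keep the clean samples, and use confidence to reject the corrupted ones. In fact you are more careful than the paper on the second half — the paper simply asserts that confident predictions give $100\%$ precision on corrupted samples, whereas your limiting argument (observed-label loss $-\ln\delta$ versus threshold $\approx\tfrac{K-1}{K}(-\ln\delta)$) is exactly the missing detail needed to resolve the degenerate $\infty<\infty$ comparison.
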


\vspace{-5pt}
\subsection{Training with Sieved Samples}\label{sec:discussSieve}
\vspace{-5pt}

We discuss the necessity of a dynamic sample sieve in this subsection. Despite the strong guarantee in expectation as shown Theorem \ref{col:beta}, performing direct Empirical Risk Minimization (ERM) of the regularized loss is likely to return a sub-optimal solution. Although Theorem \ref{col:beta} guarantees the equivalence of minimizing two first-order statistics,
their \emph{second-order statistics} are also important for estimating the expectation when examples are finite. Intuitively, Term-1 $\underline{T} \cdot \mathbb{E}_{\mathcal D}[\ell(f(X), Y)]$ primarily helps distinguish a good classifier from a bad one on the clean distribution. The existence of the leading constant $\underline{T}$ reduces the power of the above discrimination, as effectively the gap between the expected losses become smaller as noise increases ($\underline{T}$ will decrease). Therefore we would require more examples to recognize the better model. Equivalently, the variance of the selection becomes larger. In Appendix 
\ref{supp:cal_var}, we also offer an explanation from the variance's perspective. 
For some instances with extreme label noise, the $\beta$ satisfying Eqn. (\ref{eqn:beta}) in Theorem \ref{col:beta} may not exist. In such case, these instances cannot be properly used and other auxiliary techniques are necessary (e.g., sample pruning).

Sieving out the corrupted examples from the clean ones allows us a couple of better solutions. First, we can focus on performing ERM using these sieved clean examples only. We derive the risk bound for training with these clean examples in Appendix \ref{proof:cor3}. Secondly, leveraging the sample sieve to distinguish clean examples from corrupted ones provides a flexible interface for various robust training techniques such that the performance can be further improved. For example, semi-supervised learning techniques can be applied (see section \ref{sec:exp} for more details).

\vspace{-10pt}
\section{Experiments}\label{sec:exp}
\vspace{-10pt}

Now we present experimental evidences of how \SPL{} works. \footnote{\rev{The logarithmic function in $\ell_{\text{CR}}$ is adapted to $\ln (f_{x}[y]+10^{-8})$ for numerical stability.}}

\textbf{Datasets:} \SPL{} is evaluated on three benchmark datasets: CIFAR-10, CIFAR-100 \citep{krizhevsky2009learning} and Clothing1M \citep{xiao2015learning}. Following the convention from \cite{xu2019l_dmi}, we use ResNet34 for CIFAR-10 and CIFAR-100 and ResNet50 for Clothing1M.

\textbf{Noise type:} We experiment with three types of label noise: symmetric, asymmetric and instance-dependent label noise. Symmetric noise is generated by randomly flipping a true label to the other possible labels w.p. $\varepsilon$ \citep{kim2019nlnl}, where $\varepsilon$ is called the noise rate.
Asymmetric noise is generated by flipping the true label to the next class (\emph{i.e.}, label $i\rightarrow i+1, \mod K$) w.p. $\varepsilon$. Instance-dependent label noise is a more challenging setting and we generate instance-dependent label noise following the method from \cite{xia2020parts} (See Appendix \ref{sec:instance_noise_gen} for details).
In expectation, the noise rate $\varepsilon$ for all noise regimes is the overall ratio of corrupted examples in the whole dataset.

\textbf{Consistency training after the sample sieve:}
Let $\tau$ be the last iteration of \SPL{}.
Define {$L(\tau):=\{n |n \in [N], v_n^{(\tau)} = 1\}$, $H(\tau):=\{n | n \in [N], v_n^{(\tau)} = 0\}$, $\widetilde D_{L(\tau)}:=\{(x_n,\tilde y_n):n\in L(\tau)\}$, $\widetilde D_{H(\tau)}:=\{(x_n,\tilde y_n):n\in H(\tau)\}$}.
Thus $\widetilde D_{L(\tau)}$ is sieved as clean examples and $\widetilde D_{H(\tau)}$ is filtered out as corrupted ones.
Examples $(x_n,\tilde y_n) \in \widetilde D_{L(\tau)}$ lead the training direction using the CE loss as $\sum_{n\in L(\tau)} \ell(f(x_{n}),\tilde y_{n})$.
Noting the labels in $\widetilde{D}_{H(\tau)}$ are supposed to be corrupted and can distract the training, we simply drop them.
On the other hand, feature information of these examples encodes useful information that we can further leverage to improve the generalization ability of models. There are different ways to use this unsupervised information, in this paper, we chose to minimize the KL-divergence between predictions on the original feature and the augmented feature to make predictions consistent.
This is a common option as chosen by \cite{li2019learning}, \cite{xie2019unsupervised}, and \cite{zhang2020distilling}. 
The consistency loss function in epoch-$t$ is $\sum_{n\in H(\tau)} \ell_{\text{KL}}(f(x_n),\bar f^{(t)}(x_{n,t}))$,
where $\bar f^{(t)}$ is a copy of the DNN at the beginning of epoch-$t$ but without gradients. 
Summing the classification and consistency loss yields the total loss.
See Appendix 
\ref{Secsup:pip} for an illustration.

\textbf{Other alternatives:} Checking the consistency of noisy predictions is only one possible way to leverage the additional information after sample sieves. Our basic idea of first sieving the dataset and then treating corrupted examples differently from clean ones admits other alternatives.
There are many other possible designs after sample sieves, e.g., estimating transition matrix using sieved examples then applying loss-correction \citep{ patrini2017making,vahdat2017toward,xiao2015learning}, making the consistency loss as another regularization term and retraining the model \citep{zhang2020distilling}, correcting the sample selection bias in clean examples and retraining \citep{cheng2017learning,fang2020rethinking}, or relabeling those corrupted examples and retraining, etc. 
\rev{Additionally, clustering methods on the feature space \citep{han2019deep,luo2020ijcai} or high-order information \citep{zhu2020second} can also be exploited along with the dynamic sample sieve.}
Besides, the current structure is ready to include other techniques such as mixup \citep{zhang2018mixup}.

\textbf{Quality of our sample sieve:}
Figure~\ref{fig:my_label} shows the F-scores of sieved clean examples with training epochs on the symmetric and the instance-based label noise. F-score quantifies the quality of the sample sieve by the harmonic mean of precision (ratio of actual cleans examples in sieved clean ones) and recall (ratio of sieved cleans examples in actual clean ones).
We compare \SPL{} with Co-teaching and Co-teaching+. 
Note the F-scores of \SPL{} and Co-teaching are consistently high on the symmetric noise, while \SPL{} achieves higher performance on the challenging instance-based label noise, especially with the $60\%$ noise rate where the other two methods have low F-scores. 

\begin{figure}
    \centering
    \includegraphics[width=.9\textwidth]{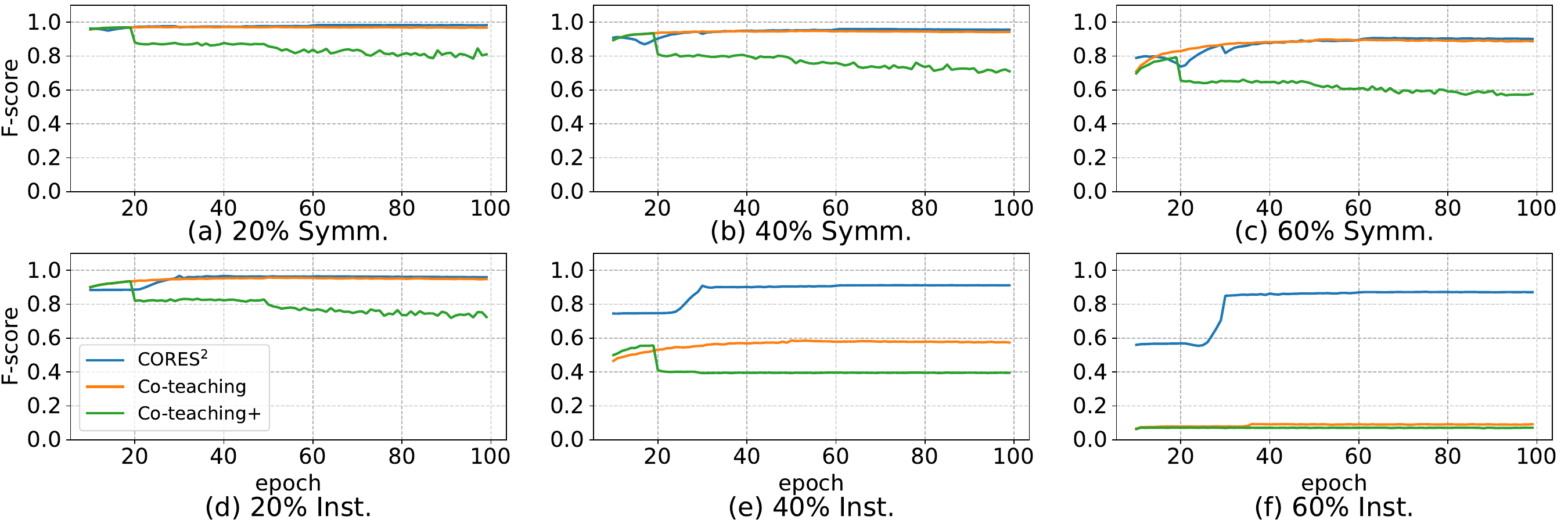}
    \vspace{-7pt}
    \caption{F-score comparisons on CIFAR10 under symmetric (Symm.) and  instance-based (Inst.) label noise. 
    {\small${\sf F}\text{-}{\sf score}\hspace{-1pt}:=$}$\frac{2\cdot \sf Pre \cdot Re}{\sf Pre + Re}$, where 
    {\small${\sf Pre}\hspace{-1pt}:=$}$\frac{\sum_{n\in[N]} \BR(v_n=1, y_n = \tilde y_n)}{\sum_{n\in[N]} \BR(v_n=1)}$, and
    {\small${\sf Re}\hspace{-1pt}:=$}$\frac{\sum_{n\in[N]} \BR(v_n=1, y_n = \tilde y_n)}{\sum_{n\in[N]} \BR(y_n = \tilde y_n)}$.}
    \label{fig:my_label}
\end{figure}

\textbf{Experiments on CIFAR-10, CIFAR-100 and Clothing1M:}\label{sec_5_2}
In this section, we compare \SPL{} with several state-of-the-art methods on CIFAR-10 and CIFAR-100 under instance-based, symmetric and asymmetric label noise settings, which is shown on Table \ref{table:cifar10-new} and Table \ref{table:cifar10-100}. \SPL{}$^\star$ denotes that we apply consistency training on the corrupted examples after the sample sieve. For a fair comparison, all the methods use ResNet-34 as the backbone. 
By comparing the performance of CE on the symmetric and the instance-based label noise,
we note the instance-based label noise is a more challenging setting. Even though some methods (e.g., 
$L_{\sf{DMI}}$) behaves well on symmetric and asymmetric label noise, they may reach low test accuracies on the instance-based label noise, especially when the noise rate is high or the dataset is more complex. However, \SPL{} consistently works well on the instance-based label noise and adding the consistency training gets better results. 
Table~\ref{table:c1m} verifies \SPL{} on Clothing1M, a dataset with real human label noise. Compared to the other approaches, \SPL{} also works fairly well on the Clothing1M dataset. 
See more experiments in Appendix 
\ref{supp:exps}. We also provide source codes with detailed instructions in supplementary materials.

{	\begin{table*}[!t]
		\setlength{\abovecaptionskip}{-0.1cm}
		\setlength{\belowcaptionskip}{0.25cm}
		\caption{Comparison of test accuracies on clean datasets under instance-based label noise.}
		\begin{center}
			\scalebox{0.8}{\footnotesize{\begin{tabular}{c|cccccc} 
				\hline 
				 \multirow{2}{*}{Method}  & \multicolumn{3}{c}{\emph{Inst. CIFAR10} } & \multicolumn{3}{c}{\emph{Inst. CIFAR100} }   \\ 
				 & $\varepsilon = 0.2$&$\varepsilon = 0.4$&$\varepsilon = 0.6$ & $\varepsilon = 0.2$&$\varepsilon = 0.4$&$\varepsilon = 0.6$\\
				\hline\hline
			     Cross Entropy &87.16 & 75.16 & 44.64 &58.72 & 41.14 & 25.29\\
				 Forward $T$ \citep{patrini2017making} &88.08 & 82.67 & 41.57 & 58.95 & 41.68  &22.83\\
				 $L_{\sf DMI}$ \citep{xu2019l_dmi}  &88.80 & 82.70 & 70.54 & 58.66 & 41.77  &28.00\\
				 $L_{q}$ \citep{zhang2018generalized}   &86.45 & 69.02 & 32.94& 58.18&40.32&23.13\\
				 SCE \citep{wang2019symmetric} & 89.11 & 72.04 & 44.83 & 59.87 & 41.76 & 23.41 \\
				 Co-teaching \citep{han2018co} &88.66 & 69.50 & 34.61&43.03&23.13&7.07\\
				 Co-teaching+ \citep{yu2019does}  &89.04 & 69.15 & 33.33&41.84 &24.40 &8.74\\
				JoCoR \citep{wei2020combating} &88.71 & 68.97 & 30.27& 44.28& 22.77 &7.54\\
				Peer Loss \citep{liu2019peer} & 89.33 &  81.09 & 73.73 &59.92 & 45.76 & 33.61\\
				\SPL{}  &\textbf{89.50} & \textbf{82.84} & \textbf{79.66} & \textbf{61.25} & \textbf{47.81} &\textbf{37.85}\\
				\SPL{}$^\star$  &\textbf{95.42}  & \textbf{88.45} &\textbf{85.53}&\textbf{72.91}&\textbf{70.66} &\textbf{63.08} \\
			   \hline
			\end{tabular}}}
		\end{center}
		\vspace{-5pt}
		\label{table:cifar10-new}
	\end{table*}
}

{	\begin{table*}[!t]
		\setlength{\abovecaptionskip}{-0.1cm}
		\setlength{\belowcaptionskip}{0.25cm}
		\caption{Comparison of test accuracies on clean datasets under symmetric/asymmetric label noise. }
		\begin{center}
			\scalebox{0.8}{\footnotesize{\begin{tabular}{c|cccccccccc} 
				\hline 
				\multirow{2}{*}{Method}  & \multicolumn{2}{c}{\emph{Symm. CIFAR10}} &    \multicolumn{2}{c}{\emph{Asymm. CIFAR10}} & \multicolumn{2}{c}{\emph{Symm. CIFAR100}} &    \multicolumn{2}{c}{\emph{Asymm. CIFAR100}} \\ 
				   &$\varepsilon = 0.4$&$\varepsilon = 0.6$&$\varepsilon = 0.2$&$\varepsilon = 0.3$&$\varepsilon = 0.4$&$\varepsilon = 0.6$&$\varepsilon = 0.2$&$\varepsilon = 0.3$\\
				\hline\hline
				Cross Entropy&81.88&74.14&88.59&86.14 &48.20&37.41&59.20&51.40\\
			   	MAE \citep{ghosh2017robust}  &61.63&41.98&59.67&57.62 &7.68&6.45&11.16&8.97\\
			   	Forward $T$ \citep{patrini2017making} &83.27&75.34&89.42&88.25&53.04&41.59&64.86&64.72\\
			   	$L_{q}$ \citep{zhang2018generalized} &87.13&82.54&89.33&85.45&61.77&53.16&66.59&61.45\\
			   	$L_{\sf DMI}$ \citep{xu2019l_dmi} 
			   & 83.04&76.51&89.04&87.88&52.32&40.00&60.04&52.82\\
			   	NLNL \citep{kim2019nlnl} &92.43&88.32&93.35&91.80&66.39&56.51&63.12&54.87\\
			   	SELF \citep{nguyen2019self}&91.13&-&93.75&92.42&66.71&-&70.53&65.09\\
			   	\SPL$^\star$&\textbf{93.76}&\textbf{89.78}&\textbf{95.18}&\textbf{94.67}&\textbf{72.22}&\textbf{59.16}&\textbf{75.19}&\textbf{73.81}\\
			   \hline
			   	\hline
			\end{tabular}}}
		\end{center}
		\vspace{-5pt}
		\label{table:cifar10-100}
	\end{table*}
}

\begin{table}[!t]
		\setlength{\abovecaptionskip}{-0.1cm}
		\setlength{\belowcaptionskip}{0.25cm}
	\caption{The best epoch (clean) test accuracy for each method on Clothing1M. }
	\begin{center}
	\setlength{\tabcolsep}{1mm}{
	\scalebox{0.8}{\small{
		\begin{tabular}{c|cccccccc} 
			\hline 
			\multirow{2}{*}{Method}  & CE & Forward $T$ & Co-teaching & JoCoR &$L_{\sf DMI}$ & PTD-R-V & \SPL{} \\
				   &(Baseline)&\citep{patrini2017making}&\citep{han2018co}&\citep{wei2020combating}&\citep{xu2019l_dmi}&\citep{xia2020parts}&(our)\\
			\hline \hline 
			Acc. & 68.94 & 70.83 & 69.21 & 70.30 & 72.46& 71.67 & \textbf{73.24} \\
			\hline
		\end{tabular}
		}}}
	\end{center}
	\label{table:c1m}
\end{table}

\section{Conclusions}

This paper introduces \SPL{}, a sample sieve that is guaranteed to be robust to general instance-dependent label noise and sieve out corrupted examples, but without using explicit knowledge of the noise rates of labels.
The analysis of \SPL{} assumed that the Bayes optimal labels are the same as clean labels. 
Future directions of this work include extensions to more general cases where the Bayes optimal labels may differ from clean labels.
We are also interested in exploring different possible designs of robust training with sieved examples.

\paragraph{Acknowledgement} This work is partially supported by the National Science Foundation (NSF) under grant IIS-2007951 and the Office of Naval Research under grant
N00014-20-1-22.

\section*{Conditions Required for Theorem~\ref{pro:dynamic}}
Theorem~\ref{pro:dynamic} holds based on the following three assumptions: %
\begin{itemize}
    \item[A1.] The model capacity is infinite (i.e., it can realize arbitrary variation).
    \item[A2.] The model is updated using the gradient descent algorithm (i.e. updates follow the direction of  decreasing $\mathbb{E}_{\mathcal D}\left[\ell(f(X), Y) \right] - \mathbb{E}_{\mathcal D_{Y}}\left[\mathbb{E}_{\mathcal D_{X}}\left[\ell\left(f(X), Y\right)\right]\right]$).
    \item[A3.] The derivative of network function $\frac{\partial f(x;w)}{\partial w_i}$ is smooth (i.e. the network function has no singular point), where $w_i$'s are model parameters.
\end{itemize}

\clearpage
\newpage

{
\bibliographystyle{iclr2021_conference}
\bibliography{library-zhaowei}

\begin{thebibliography}{58}
\providecommand{\natexlab}[1]{#1}
\providecommand{\url}[1]{\texttt{#1}}
\expandafter\ifx\csname urlstyle\endcsname\relax
  \providecommand{\doi}[1]{doi: #1}\else
  \providecommand{\doi}{doi: \begingroup \urlstyle{rm}\Url}\fi

\bibitem[Agarwal et~al.(2016)Agarwal, Podchiyska, Banda, Goel, Leung, Minty,
  Sweeney, Gyang, and Shah]{agarwal2016learning}
Vibhu Agarwal, Tanya Podchiyska, Juan~M Banda, Veena Goel, Tiffany~I Leung,
  Evan~P Minty, Timothy~E Sweeney, Elsie Gyang, and Nigam~H Shah.
\newblock Learning statistical models of phenotypes using noisy labeled
  training data.
\newblock \emph{Journal of the American Medical Informatics Association},
  23\penalty0 (6):\penalty0 1166--1173, 2016.

\bibitem[Alexandari et~al.(2020)Alexandari, Kundaje, and
  Shrikumar]{alexandari2019maximum}
Amr~M. Alexandari, Anshul Kundaje, and Avanti Shrikumar.
\newblock Maximum likelihood with bias-corrected calibration is hard-to-beat at
  label shift adaptation.
\newblock In \emph{Proceedings of the 37th International Conference on Machine
  Learning}, ICML '20, 2020.

\bibitem[Amid et~al.(2019)Amid, Warmuth, Anil, and Koren]{amid2019robust}
Ehsan Amid, Manfred~KK Warmuth, Rohan Anil, and Tomer Koren.
\newblock Robust bi-tempered logistic loss based on bregman divergences.
\newblock In \emph{Advances in Neural Information Processing Systems}, pp.\
  14987--14996, 2019.

\bibitem[Arazo et~al.(2019)Arazo, Ortego, Albert, O'Connor, and
  McGuinness]{arazo2019unsupervised}
Eric Arazo, Diego Ortego, Paul Albert, Noel~E O'Connor, and Kevin McGuinness.
\newblock Unsupervised label noise modeling and loss correction.
\newblock \emph{arXiv preprint arXiv:1904.11238}, 2019.

\bibitem[Bartlett \& Mendelson(2002)Bartlett and
  Mendelson]{bartlett2002rademacher}
Peter~L Bartlett and Shahar Mendelson.
\newblock Rademacher and gaussian complexities: Risk bounds and structural
  results.
\newblock \emph{Journal of Machine Learning Research}, 3\penalty0
  (Nov):\penalty0 463--482, 2002.

\bibitem[Berthon et~al.(2020)Berthon, Han, Niu, Liu, and
  Sugiyama]{berthon2020confidence}
Antonin Berthon, Bo~Han, Gang Niu, Tongliang Liu, and Masashi Sugiyama.
\newblock Confidence scores make instance-dependent label-noise learning
  possible.
\newblock \emph{arXiv preprint arXiv:2001.03772}, 2020.

\bibitem[Chatterjee(2020)]{chatterjee2020coherent}
Satrajit Chatterjee.
\newblock Coherent gradients: An approach to understanding generalization in
  gradient descent-based optimization.
\newblock In \emph{International Conference on Learning Representations}, 2020.

\bibitem[Cheng et~al.(2020)Cheng, Liu, Ramamohanarao, and
  Tao]{cheng2017learning}
Jiacheng Cheng, Tongliang Liu, Kotagiri Ramamohanarao, and Dacheng Tao.
\newblock Learning with bounded instance-and label-dependent label noise.
\newblock In \emph{Proceedings of the 37th International Conference on Machine
  Learning}, ICML '20, 2020.

\bibitem[Fang et~al.(2020)Fang, Lu, Niu, and Sugiyama]{fang2020rethinking}
Tongtong Fang, Nan Lu, Gang Niu, and Masashi Sugiyama.
\newblock Rethinking importance weighting for deep learning under distribution
  shift.
\newblock \emph{arXiv preprint arXiv:2006.04662}, 2020.

\bibitem[Ghosh et~al.(2017)Ghosh, Kumar, and Sastry]{ghosh2017robust}
Aritra Ghosh, Himanshu Kumar, and PS~Sastry.
\newblock Robust loss functions under label noise for deep neural networks.
\newblock In \emph{Thirty-First AAAI Conference on Artificial Intelligence},
  2017.

\bibitem[Gong et~al.(2018)Gong, Li, Meng, Miao, and Liu]{gong2018decomposition}
Maoguo Gong, Hao Li, Deyu Meng, Qiguang Miao, and Jia Liu.
\newblock Decomposition-based evolutionary multiobjective optimization to
  self-paced learning.
\newblock \emph{IEEE Transactions on Evolutionary Computation}, 23\penalty0
  (2):\penalty0 288--302, 2018.

\bibitem[Han et~al.(2018)Han, Yao, Yu, Niu, Xu, Hu, Tsang, and
  Sugiyama]{han2018co}
Bo~Han, Quanming Yao, Xingrui Yu, Gang Niu, Miao Xu, Weihua Hu, Ivor Tsang, and
  Masashi Sugiyama.
\newblock Co-teaching: Robust training of deep neural networks with extremely
  noisy labels.
\newblock In \emph{Advances in neural information processing systems}, pp.\
  8527--8537, 2018.

\bibitem[Han et~al.(2019)Han, Luo, and Wang]{han2019deep}
Jiangfan Han, Ping Luo, and Xiaogang Wang.
\newblock Deep self-learning from noisy labels.
\newblock In \emph{Proceedings of the IEEE International Conference on Computer
  Vision}, pp.\  5138--5147, 2019.

\bibitem[Huang et~al.(2007)Huang, Gretton, Borgwardt, Sch{\"o}lkopf, and
  Smola]{huang2007correcting}
Jiayuan Huang, Arthur Gretton, Karsten Borgwardt, Bernhard Sch{\"o}lkopf, and
  Alex~J Smola.
\newblock Correcting sample selection bias by unlabeled data.
\newblock In \emph{Advances in neural information processing systems}, pp.\
  601--608, 2007.

\bibitem[Jiang et~al.(2017)Jiang, Zhou, Leung, Li, and
  Fei-Fei]{jiang2017mentornet}
Lu~Jiang, Zhengyuan Zhou, Thomas Leung, Li-Jia Li, and Li~Fei-Fei.
\newblock Mentornet: Learning data-driven curriculum for very deep neural
  networks on corrupted labels.
\newblock \emph{arXiv preprint arXiv:1712.05055}, 2017.

\bibitem[Kim et~al.(2019)Kim, Yim, Yun, and Kim]{kim2019nlnl}
Youngdong Kim, Junho Yim, Juseung Yun, and Junmo Kim.
\newblock Nlnl: Negative learning for noisy labels.
\newblock In \emph{Proceedings of the IEEE International Conference on Computer
  Vision}, pp.\  101--110, 2019.

\bibitem[Krizhevsky et~al.(2009)Krizhevsky, Hinton,
  et~al.]{krizhevsky2009learning}
Alex Krizhevsky, Geoffrey Hinton, et~al.
\newblock Learning multiple layers of features from tiny images.
\newblock Technical report, Citeseer, 2009.

\bibitem[Krizhevsky et~al.(2012)Krizhevsky, Sutskever, and
  Hinton]{krizhevsky2012imagenet}
Alex Krizhevsky, Ilya Sutskever, and Geoffrey~E Hinton.
\newblock Imagenet classification with deep convolutional neural networks.
\newblock In \emph{Advances in neural information processing systems}, pp.\
  1097--1105, 2012.

\bibitem[Li et~al.(2019)Li, Wong, Zhao, and Kankanhalli]{li2019learning}
Junnan Li, Yongkang Wong, Qi~Zhao, and Mohan~S Kankanhalli.
\newblock Learning to learn from noisy labeled data.
\newblock In \emph{Proceedings of the IEEE Conference on Computer Vision and
  Pattern Recognition}, pp.\  5051--5059, 2019.

\bibitem[Li et~al.(2020)Li, Socher, and Hoi]{Li2020DivideMix}
Junnan Li, Richard Socher, and Steven~C.H. Hoi.
\newblock Dividemix: Learning with noisy labels as semi-supervised learning.
\newblock In \emph{International Conference on Learning Representations}, 2020.
\newblock URL \url{https://openreview.net/forum?id=HJgExaVtwr}.

\bibitem[Li et~al.(2017)Li, Yang, Song, Cao, Luo, and Li]{li2017learning}
Yuncheng Li, Jianchao Yang, Yale Song, Liangliang Cao, Jiebo Luo, and Li-Jia
  Li.
\newblock Learning from noisy labels with distillation.
\newblock In \emph{Proceedings of the IEEE International Conference on Computer
  Vision}, pp.\  1910--1918, 2017.

\bibitem[Liu \& Tao(2015)Liu and Tao]{liu2015classification}
Tongliang Liu and Dacheng Tao.
\newblock Classification with noisy labels by importance reweighting.
\newblock \emph{IEEE Transactions on pattern analysis and machine
  intelligence}, 38\penalty0 (3):\penalty0 447--461, 2015.

\bibitem[Liu(2021)]{liu2021importance}
Yang Liu.
\newblock The importance of understanding instance-level noisy labels, 2021.

\bibitem[Liu \& Chen(2017)Liu and Chen]{liu2017machine}
Yang Liu and Yiling Chen.
\newblock Machine-learning aided peer prediction.
\newblock In \emph{Proceedings of the 2017 ACM Conference on Economics and
  Computation}, pp.\  63--80, 2017.

\bibitem[Liu \& Guo(2020)Liu and Guo]{liu2019peer}
Yang Liu and Hongyi Guo.
\newblock Peer loss functions: Learning from noisy labels without knowing noise
  rates.
\newblock In \emph{Proceedings of the 37th International Conference on Machine
  Learning}, ICML '20, 2020.

\bibitem[Luo et~al.(2020)Luo, Han, and Gong]{luo2020ijcai}
Yijing Luo, Bo~Han, and Chen Gong.
\newblock A bi-level formulation for label noise learning with spectral cluster
  discovery.
\newblock In \emph{International Joint Conference on Artificial Intelligence},
  pp.\  2605--2611, 2020.

\bibitem[Manwani \& Sastry(2013)Manwani and Sastry]{manwani2013noise}
Naresh Manwani and PS~Sastry.
\newblock Noise tolerance under risk minimization.
\newblock \emph{IEEE transactions on cybernetics}, 43\penalty0 (3):\penalty0
  1146--1151, 2013.

\bibitem[Natarajan et~al.(2013)Natarajan, Dhillon, Ravikumar, and
  Tewari]{natarajan2013learning}
Nagarajan Natarajan, Inderjit~S Dhillon, Pradeep~K Ravikumar, and Ambuj Tewari.
\newblock Learning with noisy labels.
\newblock In \emph{Advances in neural information processing systems}, pp.\
  1196--1204, 2013.

\bibitem[Nguyen et~al.(2019)Nguyen, Mummadi, Ngo, Nguyen, Beggel, and
  Brox]{nguyen2019self}
Duc~Tam Nguyen, Chaithanya~Kumar Mummadi, Thi Phuong~Nhung Ngo, Thi Hoai~Phuong
  Nguyen, Laura Beggel, and Thomas Brox.
\newblock Self: Learning to filter noisy labels with self-ensembling.
\newblock \emph{arXiv preprint arXiv:1910.01842}, 2019.

\bibitem[Northcutt et~al.(2019)Northcutt, Jiang, and
  Chuang]{northcutt2019confident}
Curtis~G Northcutt, Lu~Jiang, and Isaac~L Chuang.
\newblock Confident learning: Estimating uncertainty in dataset labels.
\newblock \emph{arXiv preprint arXiv:1911.00068}, 2019.

\bibitem[Patrini et~al.(2017)Patrini, Rozza, Krishna~Menon, Nock, and
  Qu]{patrini2017making}
Giorgio Patrini, Alessandro Rozza, Aditya Krishna~Menon, Richard Nock, and
  Lizhen Qu.
\newblock Making deep neural networks robust to label noise: A loss correction
  approach.
\newblock In \emph{Proceedings of the IEEE Conference on Computer Vision and
  Pattern Recognition}, pp.\  1944--1952, 2017.

\bibitem[Reed et~al.(2014)Reed, Lee, Anguelov, Szegedy, Erhan, and
  Rabinovich]{reed2014training}
Scott Reed, Honglak Lee, Dragomir Anguelov, Christian Szegedy, Dumitru Erhan,
  and Andrew Rabinovich.
\newblock Training deep neural networks on noisy labels with bootstrapping.
\newblock \emph{arXiv preprint arXiv:1412.6596}, 2014.

\bibitem[Shu et~al.(2020)Shu, Zhao, Chen, Xu, and Meng]{shu2020learning}
Jun Shu, Qian Zhao, Keyu Chen, Zongben Xu, and Deyu Meng.
\newblock Learning adaptive loss for robust learning with noisy labels.
\newblock \emph{arXiv preprint arXiv:2002.06482}, 2020.

\bibitem[Storkey(2009)]{storkey2009training}
Amos Storkey.
\newblock When training and test sets are different: characterizing learning
  transfer.
\newblock \emph{Dataset shift in machine learning}, pp.\  3--28, 2009.

\bibitem[Tanaka et~al.(2018)Tanaka, Ikami, Yamasaki, and
  Aizawa]{tanaka2018joint}
Daiki Tanaka, Daiki Ikami, Toshihiko Yamasaki, and Kiyoharu Aizawa.
\newblock Joint optimization framework for learning with noisy labels.
\newblock In \emph{Proceedings of the IEEE Conference on Computer Vision and
  Pattern Recognition}, pp.\  5552--5560, 2018.

\bibitem[Vahdat(2017)]{vahdat2017toward}
Arash Vahdat.
\newblock Toward robustness against label noise in training deep discriminative
  neural networks.
\newblock In \emph{Advances in Neural Information Processing Systems}, pp.\
  5596--5605, 2017.

\bibitem[Veit et~al.(2017)Veit, Alldrin, Chechik, Krasin, Gupta, and
  Belongie]{veit2017learning}
Andreas Veit, Neil Alldrin, Gal Chechik, Ivan Krasin, Abhinav Gupta, and Serge
  Belongie.
\newblock Learning from noisy large-scale datasets with minimal supervision.
\newblock In \emph{Proceedings of the IEEE Conference on Computer Vision and
  Pattern Recognition}, pp.\  839--847, 2017.

\bibitem[Wang et~al.(2019)Wang, Ma, Chen, Luo, Yi, and
  Bailey]{wang2019symmetric}
Yisen Wang, Xingjun Ma, Zaiyi Chen, Yuan Luo, Jinfeng Yi, and James Bailey.
\newblock Symmetric cross entropy for robust learning with noisy labels.
\newblock In \emph{Proceedings of the IEEE International Conference on Computer
  Vision}, pp.\  322--330, 2019.

\bibitem[Wei et~al.(2020)Wei, Feng, Chen, and An]{wei2020combating}
Hongxin Wei, Lei Feng, Xiangyu Chen, and Bo~An.
\newblock Combating noisy labels by agreement: A joint training method with
  co-regularization.
\newblock In \emph{Proceedings of the IEEE/CVF Conference on Computer Vision
  and Pattern Recognition}, pp.\  13726--13735, 2020.

\bibitem[Wei \& Liu(2021)Wei and Liu]{wei2021when}
Jiaheng Wei and Yang Liu.
\newblock When optimizing {\$}f{\$}-divergence is robust with label noise.
\newblock In \emph{International Conference on Learning Representations}, 2021.
\newblock URL \url{https://openreview.net/forum?id=WesiCoRVQ15}.

\bibitem[Xia et~al.(2019)Xia, Liu, Wang, Han, Gong, Niu, and
  Sugiyama]{xia2019anchor}
Xiaobo Xia, Tongliang Liu, Nannan Wang, Bo~Han, Chen Gong, Gang Niu, and
  Masashi Sugiyama.
\newblock Are anchor points really indispensable in label-noise learning?
\newblock In \emph{Advances in Neural Information Processing Systems}, pp.\
  6838--6849, 2019.

\bibitem[Xia et~al.(2020)Xia, Liu, Han, Wang, Gong, Liu, Niu, Tao, and
  Sugiyama]{xia2020parts}
Xiaobo Xia, Tongliang Liu, Bo~Han, Nannan Wang, Mingming Gong, Haifeng Liu,
  Gang Niu, Dacheng Tao, and Masashi Sugiyama.
\newblock Parts-dependent label noise: Towards instance-dependent label noise.
\newblock \emph{arXiv preprint arXiv:2006.07836}, 2020.

\bibitem[Xiao et~al.(2015)Xiao, Xia, Yang, Huang, and Wang]{xiao2015learning}
Tong Xiao, Tian Xia, Yi~Yang, Chang Huang, and Xiaogang Wang.
\newblock Learning from massive noisy labeled data for image classification.
\newblock In \emph{Proceedings of the IEEE Conference on Computer Vision and
  Pattern Recognition}, pp.\  2691--2699, 2015.

\bibitem[Xie et~al.(2019)Xie, Dai, Hovy, Luong, and Le]{xie2019unsupervised}
Qizhe Xie, Zihang Dai, Eduard Hovy, Minh-Thang Luong, and Quoc~V Le.
\newblock Unsupervised data augmentation.
\newblock \emph{arXiv preprint arXiv:1904.12848}, 2019.

\bibitem[Xu et~al.(2019)Xu, Cao, Kong, and Wang]{xu2019l_dmi}
Yilun Xu, Peng Cao, Yuqing Kong, and Yizhou Wang.
\newblock L\_dmi: An information-theoretic noise-robust loss function.
\newblock \emph{NeurIPS, arXiv:1909.03388}, 2019.

\bibitem[Yao et~al.(2020{\natexlab{a}})Yao, Yang, Han, Niu, and
  Kwok]{yao2020searching}
Quanming Yao, Hansi Yang, Bo~Han, Gang Niu, and James~T Kwok.
\newblock Searching to exploit memorization effect in learning with noisy
  labels.
\newblock In \emph{Proceedings of the 37th International Conference on Machine
  Learning}, ICML '20, 2020{\natexlab{a}}.

\bibitem[Yao et~al.(2020{\natexlab{b}})Yao, Liu, Han, Gong, Deng, Niu, and
  Sugiyama]{yao2020dual}
Yu~Yao, Tongliang Liu, Bo~Han, Mingming Gong, Jiankang Deng, Gang Niu, and
  Masashi Sugiyama.
\newblock Dual {T}: Reducing estimation error for transition matrix in
  label-noise learning.
\newblock In \emph{Advances in Neural Information Processing Systems},
  volume~33, pp.\  7260--7271, 2020{\natexlab{b}}.

\bibitem[Yi \& Wu(2019)Yi and Wu]{yi2019probabilistic}
Kun Yi and Jianxin Wu.
\newblock Probabilistic end-to-end noise correction for learning with noisy
  labels.
\newblock In \emph{Proceedings of the IEEE Conference on Computer Vision and
  Pattern Recognition}, pp.\  7017--7025, 2019.

\bibitem[Yu et~al.(2019)Yu, Han, Yao, Niu, Tsang, and Sugiyama]{yu2019does}
Xingrui Yu, Bo~Han, Jiangchao Yao, Gang Niu, Ivor~W Tsang, and Masashi
  Sugiyama.
\newblock How does disagreement help generalization against label corruption?
\newblock \emph{arXiv preprint arXiv:1901.04215}, 2019.

\bibitem[Zhang et~al.(2016)Zhang, Bengio, Hardt, Recht, and
  Vinyals]{zhang2016understanding}
Chiyuan Zhang, Samy Bengio, Moritz Hardt, Benjamin Recht, and Oriol Vinyals.
\newblock Understanding deep learning requires rethinking generalization.
\newblock \emph{arXiv preprint arXiv:1611.03530}, 2016.

\bibitem[Zhang et~al.(2018)Zhang, Cisse, Dauphin, and
  Lopez-Paz]{zhang2018mixup}
Hongyi Zhang, Moustapha Cisse, Yann~N. Dauphin, and David Lopez-Paz.
\newblock mixup: Beyond empirical risk minimization.
\newblock In \emph{International Conference on Learning Representations}, 2018.
\newblock URL \url{https://openreview.net/forum?id=r1Ddp1-Rb}.

\bibitem[Zhang et~al.(2017)Zhang, Sheng, Li, and Wu]{zhang2017improving}
Jing Zhang, Victor~S Sheng, Tao Li, and Xindong Wu.
\newblock Improving crowdsourced label quality using noise correction.
\newblock \emph{IEEE transactions on neural networks and learning systems},
  29\penalty0 (5):\penalty0 1675--1688, 2017.

\bibitem[Zhang et~al.(2020{\natexlab{a}})Zhang, Wu, Chen, Zhao, and
  Lu]{zhang2020self}
Xuchao Zhang, Xian Wu, Fanglan Chen, Liang Zhao, and Chang-Tien Lu.
\newblock Self-paced robust learning for leveraging clean labels in noisy data.
\newblock In \emph{AAAI}, pp.\  6853--6860, 2020{\natexlab{a}}.

\bibitem[Zhang \& Sabuncu(2018)Zhang and Sabuncu]{zhang2018generalized}
Zhilu Zhang and Mert Sabuncu.
\newblock Generalized cross entropy loss for training deep neural networks with
  noisy labels.
\newblock In \emph{Advances in neural information processing systems}, pp.\
  8778--8788, 2018.

\bibitem[Zhang et~al.(2020{\natexlab{b}})Zhang, Zhang, Arik, Lee, and
  Pfister]{zhang2020distilling}
Zizhao Zhang, Han Zhang, Sercan~O Arik, Honglak Lee, and Tomas Pfister.
\newblock Distilling effective supervision from severe label noise.
\newblock In \emph{Proceedings of the IEEE/CVF Conference on Computer Vision
  and Pattern Recognition}, pp.\  9294--9303, 2020{\natexlab{b}}.

\bibitem[Zhu et~al.(2021{\natexlab{a}})Zhu, Liu, and Liu]{zhu2020second}
Zhaowei Zhu, Tongliang Liu, and Yang Liu.
\newblock A second-order approach to learning with instance-dependent label
  noise.
\newblock In \emph{The IEEE Conference on Computer Vision and Pattern
  Recognition (CVPR)}, June 2021{\natexlab{a}}.

\bibitem[Zhu et~al.(2021{\natexlab{b}})Zhu, Song, and
  Liu]{zhu2021clusterability}
Zhaowei Zhu, Yiwen Song, and Yang Liu.
\newblock Clusterability as an alternative to anchor points when learning with
  noisy labels.
\newblock \emph{arXiv preprint arXiv:2102.05291}, 2021{\natexlab{b}}.

\bibitem[Zielinski et~al.(2020)Zielinski, Krishnan, and
  Chatterjee]{zielinski2020explaining}
Piotr Zielinski, Shankar Krishnan, and Satrajit Chatterjee.
\newblock Explaining memorization and generalization: A large-scale study with
  coherent gradients.
\newblock \emph{arXiv preprint arXiv:2003.07422}, 2020.

\end{thebibliography}
}

\clearpage
\newpage
\appendix

\section*{Appendix}
The appendices are organized as follows.
Section \ref{supp:related} presents the full version of related works.
Section \ref{supp:thms} details the proofs for our theorems.
Section \ref{supp:otherthms} supplements other necessary evidences to justify \SPL{}.
Section \ref{supp:exps} shows more experimental details and results.

\section{Full Version of Related Works}\label{supp:related}

{Learning with noisy labels} has observed exponentially growing interests.
Since the traditional cross-entropy (CE) loss has been proved to easily overfit noisy labels \citep{zhang2016understanding}, researchers try to design different loss functions to handle this problem.
There were two main perspectives on \emph{designing loss functions}.
Considering the fact that outputs of logarithm functions in the CE loss grow explosively when the prediction $f(x)$ approaches zero, some researchers tried to design bounded loss functions \citep{amid2019robust,wang2019symmetric,gong2018decomposition,ghosh2017robust}.
To avoid relying on fine-tuning of hyper-parameters in loss functions, a meta-learning method was proposed bt \cite{shu2020learning} to combine the above four loss functions together. 
However, simply considering loss function values without discussing the noise type and the corresponding statistics could not be noise-tolerant as defined by \cite{manwani2013noise}.
As a complementary, others started from noise types and tried to design noise-tolerant loss functions. 
Based on the assumption that label noise only depends on the true class (a.k.a. feature-independent or label-dependent), an unbiased loss function called surrogate loss \citep{natarajan2013learning}, an information-based loss function called $L_{\sf DMI}$ \citep{xu2019l_dmi}, and a new family of loss functions to punish agreements between classifiers and noisy datasets called peer loss \citep{liu2019peer} were proposed.
They proved theoretically that training DNNs using their loss functions on feature-independent noisy datasets was equivalent to training CE on the corresponding unobservable clean datasets.
However, surrogate loss focused on the binary classifications and required knowing noise rates. $L_{\sf DMI}$ and peer loss does not require knowing noise rates while $L_{\sf DMI}$ may not be easy for extension and multi-class classification of peer loss requires particular transition matrices.

The \emph{correction} approach is also popular in handling label noise. 
Previous works \citep{patrini2017making,vahdat2017toward,xiao2015learning} assumed the feature-independent noise transition matrix was given or could be estimated and attempted to use it to correct loss functions. For example, \cite{patrini2017making} first estimated the noise transition matrix and then relied on it to correct forward or backward propagation during training. 
However, without a set of clean examples, the noise transition matrix could be hard to estimate correctly.
Instead of correcting loss functions, some methods directly corrected labels \citep{veit2017learning,li2017learning,han2019deep}, whereas it might introduce extra noise and damage useful information. 
Recent works \citep{xia2020parts,berthon2020confidence} extended loss-correction from the limited feature-independent label noise to part-dependent or a more general instance-dependent noise regime while they relied heavily on the noise rate estimation.

\emph{Sample selection} \citep{jiang2017mentornet,han2018co,yu2019does,yao2020searching,wei2020combating} %
mainly focused on exploiting the memorization of DNNs and treating the ``small loss'' examples as clean ones, while they only focused on feature-independent label noise.
\cite{cheng2017learning} tried to distill some examples relying on the predictions using the surrogate loss function \citep{natarajan2013learning}. Note estimating noise rates are necessary for both applying surrogate loss and determining the threshold for distillation.
The sample selection methods could be implemented with some semi-supervised learning techniques to improve the performance, where the corrupted examples were treated as unlabeled data \citep{Li2020DivideMix,nguyen2019self}. 
However, the training mechanisms of these methods were still based on the CE loss, which could not be guaranteed to avoid overfitting to label noise.

\section{Proof for Theorems}\label{supp:thms}

In this section, we firstly present the proof for Theorem \ref{Thm:unbiased} (our main theorem) in Section \ref{proof:thm2}, which provides a generic machinery for anatomizing noisy datasets. Then we will respectively prove Theorem \ref{pro:dynamic} in Section \ref{proof:dynamic}, Theorem \ref{Pro:alpha_general} in Section \ref{proof:thm1}, and Theorem \ref{col:beta} in Section \ref{proof:cor2} according to the order they appear.

\subsection{Proof for Theorem \ref{Thm:unbiased}}\label{proof:thm2}

\begin{customthm}{3}
(Main Theorem: Decoupling the Expected Regularized CE Loss)
In expectation, the loss with $\ell_{\text{CR}}$ can be decoupled as three separate additive terms:
\begin{equation}
\begin{split}
    & \mathbb{E}_\mathcal{\widetilde{D}}\left[\ell(f(X), \widetilde{Y})+\ell_{\text{CR}}(f(X))\right] =  \overbrace{\underline{T} \cdot \mathbb{E}_{\mathcal D}[\ell(f(X), Y)]}^{\text{\emph{Term-1}}} + \overbrace{\bar \Delta \cdot \mathbb{E}_{\mathcal D_{\Delta}}[\ell(f(X), Y)]}^{\text{\emph{Term-2}}}  \\
    & + \underbrace{\sum_{j \in [K]} \sum_{i\in[K]}
   \PP(Y=i)\mathbb{E}_{{\mathcal D}|Y=i}[(U_{ij}(X) - \beta  \PP(\widetilde Y = j))\ell(f(X), j)]}_{\text{\emph{Term-3}}},
\end{split}
\end{equation}
where $\underline{T} := \min_{j\in[K]} ~ T_{jj}, ~~ \bar \Delta := \sum_{j\in [K]} \Delta_j \PP(Y=j),~~ \Delta_j:= T_{jj} - \underline{T}$, $U_{ij}(X) = T_{ij}(X), \forall i\ne j, U_{jj}(X) = T_{jj}(X) - T_{jj}$,
and {\small $\mathbb{E}_{\mathcal D_{\Delta}}[\ell(f(X), Y)]:=  
            \BR{(\bar \Delta > 0)}\sum_{j \in [K]} \frac{\Delta_{j}\mathbb P(Y=j)}{\bar \Delta}  \mathbb{E}_{\mathcal D |{Y}=j}[\ell(f(X), j)]$}.
\end{customthm}

\begin{proof}
The expected form of traditional CE loss on noisy distribution $\widetilde{\mathcal D}$ can be written as   
{
{

\begin{align*}
  & \mathbb{E}_\mathcal{\widetilde{D}}[\ell(f(X), \widetilde{Y})] \\
= & \sum_{j \in [K]} \sum_{i\in[K]}  \mathbb P(Y=i)  \mathbb{E}_{\mathcal D |{Y}=i}[T_{ij}(X)\ell (f(X), j)]  \\
= & \sum_{j \in [K]} \sum_{i\in[K]}  \mathbb P(Y=i) 
T_{ij}\mathbb{E}_{\mathcal D |{Y}=i}[\ell (f(X), j)] +  \sum_{j \in [K]} \sum_{i\in[K]}  \mathbb P(Y=i) 
\text{Cov}_{\mathcal D |{Y}=i}(T_{ij}(X), \ell (f(X), j)).    \\
\end{align*}
}}

The first term could be transformed as:
\begin{equation*}
    \begin{split}
    & \sum_{j \in [K]} \sum_{i\in[K]}  \mathbb P(Y=i) 
T_{ij}\mathbb{E}_{\mathcal D |{Y}=i}[\ell (f(X), j)] \\
= & \sum_{j \in [K]} \left[ T_{jj}\mathbb P(Y=j)  \mathbb{E}_{\mathcal D |{Y}=j}[\ell(f(X), j)] +\hspace{-8pt} \sum_{i\in[K], i\ne j}\hspace{-8pt} T_{ij} \mathbb P(Y=i)  \mathbb{E}_{\mathcal D |{Y}=i}[\ell(f(X), j)] \right] \\
= & \underline{T} \mathbb{E}_{\mathcal D}[\ell(f(X), Y)] + \bar \Delta \mathbb{E}_{\mathcal D_{\Delta}}[\ell(f(X), Y)] + \sum_{j \in [K]}   \sum_{i\in[K], i\ne j}\hspace{-8pt} T_{ij} \mathbb P(Y=i)  \mathbb{E}_{\mathcal D |{Y}=i}[\ell(f(X), j)],
    \end{split}
\end{equation*}
where
\[
\underline{T} := \min_{j\in[K]} ~ T_{jj}, ~~ \bar \Delta := \sum_{j\in [K]} \Delta_j \PP(Y=j),~~ \Delta_j:= T_{jj} - \underline{T},
\]
and 
\[
    \mathbb{E}_{\mathcal D_{\Delta}}[\ell(f(X), Y)]:=  
    \begin{cases}
            \sum_{j \in [K]} \frac{\Delta_{j}\mathbb P(Y=j)}{\bar \Delta}  \mathbb{E}_{\mathcal D |{Y}=j}[\ell(f(X), j)], ~~~ \text{if} ~~~ \bar \Delta > 0, \\
        0 ~~~ \text{if} ~~~  \bar \Delta = 0.
        \end{cases}
\]

Then
{\small \begin{align*}
  &\mathbb{E}_\mathcal{\widetilde{D}}[\ell(f(X), \widetilde{Y})] \\
= &   \underline{T} \mathbb{E}_{\mathcal D}[\ell(f(X), Y)] + \bar \Delta \mathbb{E}_{\mathcal D_{\Delta}}[\ell(f(X), Y)] + \sum_{j \in [K]}   \sum_{i\in[K], i\ne j}\hspace{-8pt} T_{ij} \mathbb P(Y=i)  \mathbb{E}_{\mathcal D |{Y}=i}[\ell(f(X), j)],\\
  & +\sum_{j \in [K]} \sum_{i\in[K]}  \mathbb P(Y=i) 
\text{Cov}_{\mathcal D |{Y}=i}(T_{ij}(X),\ell (f(X), j))  \\
= &   \underline{T} \mathbb{E}_{\mathcal D}[\ell(f(X), Y)] + \bar \Delta \mathbb{E}_{\mathcal D_{\Delta}}[\ell(f(X), Y)] + \sum_{j \in [K]}   \sum_{i\in[K], i\ne j}\hspace{-8pt} T_{ij} \mathbb P(Y=i)  \mathbb{E}_{\mathcal D |{Y}=i}[\ell(f(X), j)],\\
  & +\sum_{j \in [K]} \sum_{i\in[K], i\ne j}  \mathbb P(Y=i) 
\mathbb{E}_{\mathcal D |{Y}=i}[(T_{ij}(X) - T_{ij}) (\ell (f(X), j) - \mathbb E_{{\mathcal D}|Y=i}[\ell (f(X), j)])]  \\
& +\sum_{j \in [K]}  \mathbb P(Y=j) 
\mathbb{E}_{\mathcal D |{Y}=j}[(T_{jj}(X) - T_{jj}) (\ell (f(X), j) - \mathbb E_{{\mathcal D}|Y=j}[\ell (f(X), j)])]  \\
= &  \underline{T} \mathbb{E}_{\mathcal D}[\ell(f(X), Y)] + \bar \Delta \mathbb{E}_{\mathcal D_{\Delta}}[\ell(f(X), Y)] \\
  & +\sum_{j \in [K]} \sum_{i\in[K], i\ne j}  \mathbb P(Y=i) 
\mathbb{E}_{\mathcal D |{Y}=i}[(T_{ij}(X) - T_{ij}) (\ell (f(X), j) - \mathbb E_{{\mathcal D}|Y=i}[\ell (f(X), j)]) + T_{ij} \ell(f(X),j)]  \\
  & +\sum_{j \in [K]}  \mathbb P(Y=j) 
\mathbb{E}_{\mathcal D |{Y}=j}[(T_{jj}(X) - T_{jj}) (\ell (f(X), j) - \mathbb E_{{\mathcal D}|Y=j}[\ell (f(X), j)]) ]  \\
= &  \underline{T} \mathbb{E}_{\mathcal D}[\ell(f(X), Y)] + \bar \Delta \mathbb{E}_{\mathcal D_{\Delta}}[\ell(f(X), Y)]   +\sum_{j \in [K]} \sum_{i\in[K], i\ne j}  \mathbb P(Y=i) 
\mathbb{E}_{\mathcal D |{Y}=i}[T_{ij}(X)  \ell (f(X), j)]  \\
  & +\sum_{j \in [K]}  \mathbb P(Y=j) 
\mathbb{E}_{\mathcal D |{Y}=j}[(T_{jj}(X) - T_{jj} ) \ell (f(X), j)]\\
= &  \underline{T} \mathbb{E}_{\mathcal D}[\ell(f(X), Y)] + \bar \Delta \mathbb{E}_{\mathcal D_{\Delta}}[\ell(f(X), Y)]  +\sum_{j \in [K]} \sum_{i\in[K]}  \mathbb P(Y=i) 
\mathbb{E}_{\mathcal D |{Y}=i}[U_{ij}(X)  \ell (f(X), j)],
\end{align*}}
where
\[
U_{ij}(X) = T_{ij}(X), \forall i\ne j, \quad U_{jj}(X) = T_{jj}(X) - T_{jj}.
\]

The expected form of $\ell_{\text{CR}}$ on noisy distribution $\widetilde{\mathcal D}$ can be written as
\begin{equation*}
    \begin{split}
        \mathbb{E}_\mathcal{\widetilde{D}}\left[\ell_{\text{CR}}(f(x_i))\right] 
    & = - \beta \mathbb{E}_\mathcal{\widetilde{D}} \left[ \E_{ {\mathcal D}_{\widetilde Y|\widetilde{D}}} [\ell(f(x_{i}),\widetilde Y)] \right] \\
    & = - \beta \int_{\widetilde{D}} \left[ \PP(\widetilde{D})\E_{ {\mathcal D}_{\widetilde Y|\widetilde{D}}} [\ell(f(x_{i}),\widetilde Y)] \right] \\
    & = - \beta  \sum_{j\in[K]}  \PP(\widetilde Y = j)   \mathbb{E}_{\mathcal{{D}}_X} [\ell(f(x_{i}), j)] \\
    & = - \sum_{j\in[K]} \sum_{i \in [K]}  \PP(Y=i) \mathbb{E}_{\mathcal{{D}}|Y=i} [\beta \PP(\widetilde Y = j)\ell(f(x_{i}), j)].
    \end{split}
\end{equation*}

Thus the expected form of the new regularized loss is 
\begin{equation}
\begin{split}
    & \mathbb{E}_\mathcal{\widetilde{D}}\left[\ell(f(X), \widetilde{Y})+\ell_{\text{CR}}(f(x_i))\right] =  \underline{T} \mathbb{E}_{\mathcal D}[\ell(f(X), Y)] + \bar \Delta \mathbb{E}_{\mathcal D_{\Delta}}[\ell(f(X), Y)]  \\
    & + \sum_{j \in [K]} \sum_{i\in[K]}
   \PP(Y=i)\mathbb{E}_{{\mathcal D}|Y=i}[(U_{ij}(X) - \beta  \PP(\widetilde Y = j))\ell(f(X), j)].  
\end{split}
\end{equation}
\end{proof}

\subsection{Proof for Theorem \ref{pro:dynamic}}\label{proof:dynamic}

\begin{customthm}{1}
For $\ell_{\text{CA}}(\cdot)$, %
solutions satisfying $f_{x_n}[i]>0, \forall i\in[K]$ are not locally optimal at $(x_n,\tilde{y}_n)$. %
\end{customthm}

\begin{proof}

Let $\ell(\cdot)$ be the CE loss.
Note this proof does not rely on whether the data distribution is clean or not. We use $\mathcal{D}$ to denote any data distribution and $D$ to denote the corresponding dataset. This notation applies only to this proof.
For any data distribution $\mathcal D$, we have 
\begin{equation*}
\begin{split}
& \mathbb{E}_{\mathcal D}\left[\ell(f(X), Y) - \E_{\mathcal{D}_{{Y}|{D}}}[\ell(f(x_n), Y)]\right] \\
=& \mathbb{E}_{\mathcal D}\left[\ell(f(X), Y)\right]-\mathbb{E}_{\mathcal D_{Y}}\left[\mathbb{E}_{\mathcal D_{X}}\left[\ell\left(f(X), Y\right)\right]\right]\\
= &-\int_{{\mathcal D_X}}dx\sum_{y\in[K]} \PP(x,y)\ln{f_{x}[y]}  + \int_{{\mathcal D_X}}dx\sum_{y\in[K]} \PP(x)\PP(y)\ln{f_{x}[y]}\\
= &-\int_{{\mathcal D_X}}dx\sum_{y\in[K]} \ln{f_{x}[y]} [\PP(x,y) - \PP(x)\PP(y)].
\end{split}
\end{equation*}
The dynamical analyses are based on the following three assumptions: %
\begin{itemize}
    \item[A1.] The model capacity is infinite (i.e., it can realize arbitrary variation).
    \item[A2.] The model is updated using the gradient descent algorithm (i.e. updates follow the direction of  decreasing $\mathbb{E}_{\mathcal D}\left[\ell(f(X), Y) \right] - \mathbb{E}_{\mathcal D_{Y}}\left[\mathbb{E}_{\mathcal D_{X}}\left[\ell\left(f(X), Y\right)\right]\right]$).
    \item[A3.] The derivative of network function $\frac{\partial f(x;w)}{\partial w_i}$ is smooth (i.e. the network function has no singular point), where $w_i$'s are model parameters.
\end{itemize}

Denote the variations of $f_x[y]$ during one gradient descent update by $\Delta_y(x)$. From Lemma \ref{lem1}, it can be explicitly written as
\begin{equation}\label{eq:delta_y}
    \Delta_y(x)  = f_x[y]\cdot\eta\int_{\mathcal D_X}dx'\sum_{y'\in[K]}\left[\PP(x',y')-\PP(x')\PP(y')\right] \sum_{i\in[K]} G_i(x,y)G_i(x',y'),
\end{equation}
where $\eta$ is the learning rate, 
\[
    G_i(x,y) = -\frac{\partial g_y(x)}{\partial w_i} + \sum_{y'\in[K]}f_x[y']\frac{\partial g_{y'}(x)}{\partial w_i},
\]
and $g_y(x)$ is the network output before the softmax activation. i.e. 
\[
f_{x}[y] = \frac{\exp(g_y(x))}{\sum_{y'\in[K]}{\exp(g_{y'}(x))}}.
\]
With $\Delta_y(x)$, the variation of the regularized loss is 
\begin{equation}\label{eq:dE}
    \Delta \mathbb{E}_{\mathcal D}\left[\ell(f(X), Y) + \ell_{\text{CR}}\right] = -\int_{\mathcal D_X}dx\,\PP(x)\sum_{y\in[K]}\Delta_y(x)\frac{\PP(y|x)-\PP(y)}{f_x[y]}.
\end{equation}
If the training reaches a steady state (a.k.a. local optimum), we have $\Delta \mathbb{E}_{\mathcal D}\left[\ell(f(X), Y) + \ell_{\text{CR}}\right] = 0$.
To check the property of this variation, consider the following example.
For a particular $x_0$, define
\[
    F(x_0) := \sum_{y\in[K]}\Delta_y(x_0)\frac{\PP(y|x_0)-\PP(y)}{f_{x_0}[y]}.
\]
Split the labels $y$ into the following two sets (without loss of generality, we ignore the $\PP(y|x_0)-\PP(y)=0$ cases):
\[
	\mathcal Y_{x_0;-} = \{y: \PP(y|x_0)-\PP(y) < 0\}
\]
and
\[
	\mathcal Y_{x_0;+} = \{y: \PP(y|x_0)-\PP(y) > 0\}.
\]
By assigning $\Delta_y(x_0) = a_y <0,\forall y\in \mathcal Y_{x_0;-}$ and $\Delta_y(x_0) = b_y>0, \forall y\in \mathcal Y_{x_0;+}$, one finds $F(x_0)>0$ since $f_{x_0}[y]>0$. 
Note we have an extra constraint $\sum_y \Delta_y(x_0)=0$ to ensure $\sum_{y\in[K]}f_{x_0}[y] = 1$ after update.
It is easy to check our assigned $a_y$ and $b_y$ could maintain this constraint by introducing a weight $N_{ab}$ to scale $b'_y$ as follows.
\[
	\sum_{y\in \mathcal Y_-}a_y + N_{ab}\sum_{y\in \mathcal Y_+}b'_y = 0,~ b_y = N_{ab}b'_y.
\]

Let $B_\epsilon(x_0)$ be a $\epsilon$-neighbourhood of $x_0$. Since $f_x[y]$ is continuous, we can set $\Delta_y(x)=\frac{1}{2}(1+\cos\frac{\pi\|x-x_0\|}{\epsilon})\Delta_y(x_0), \forall x\in B_\epsilon(x_0)$ and $0$ otherwise. 
The coefficient $\frac{1}{2}(1+\cos\frac{\pi\|x-x_0\|}{\epsilon})$ is added so that the continuity of $f_x[y]$ preserves.
This choice will lead to $\Delta\mathbb{E}_{\mathcal D}\left[\ell(f(X), Y) + \ell_{\text{CR}}\right]<0$. 
Therefore, for any $\ell_{\text{CA}}(f(x_n), y_{n})$ with solution $f_{x_n}[i]>0,\forall i \in[K]$, we can always find a decreasing direction, indicating the solution is not (steady) locally optimal.
Note $\mathcal D$ can be any distribution in this proof. Thus the result holds for the noisy distribution $\widetilde{\mathcal D}$.
\end{proof}

\begin{lem}\label{lem1}
\[
\Delta_y(x)  = f_x[y]\cdot\eta\int_{\mathcal D_X}dx'\sum_{y'\in[K]}\left[\PP(x',y')-\PP(x')\PP(y')\right] \sum_{i\in[K]} G_i(x,y)G_i(x',y').
\]
\end{lem}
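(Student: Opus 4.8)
The plan is to compute the change $\Delta_y(x)$ in the softmax output $f_x[y]$ produced by a single SGD step, keeping only the leading order in the learning rate $\eta$. Writing the objective from the displayed computation above as $L := -\int_{\mathcal D_X}dx'\sum_{y'\in[K]}\ln f_{x'}[y']\,[\PP(x',y')-\PP(x')\PP(y')]$, assumption A2 fixes the update of each parameter as $\Delta w_i = -\eta\,\partial L/\partial w_i$. By the smoothness in A3 the map $w\mapsto f(x;w)$ has no singular points, so I may expand to first order: $\Delta_y(x) = \sum_i \frac{\partial f_x[y]}{\partial w_i}\,\Delta w_i = -\eta\sum_i \frac{\partial f_x[y]}{\partial w_i}\frac{\partial L}{\partial w_i}$, discarding the $O(\eta^2)$ remainder. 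The task then reduces to evaluating the two factors $\partial f_x[y]/\partial w_i$ and $\partial L/\partial w_i$ and checking that their product collapses to the claimed form.

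First I would differentiate the softmax. From $f_x[y]=\exp(g_y(x))/\sum_{y'}\exp(g_{y'}(x))$ the standard Jacobian is $\partial f_x[y]/\partial g_{y'} = f_x[y](\delta_{yy'}-f_x[y'])$, so the chain rule gives $\frac{\partial f_x[y]}{\partial w_i} = f_x[y]\big[\frac{\partial g_y(x)}{\partial w_i}-\sum_{y'}f_x[y']\frac{\partial g_{y'}(x)}{\partial w_i}\big] = -f_x[y]\,G_i(x,y)$, which is precisely where the quantity $G_i(x,y)$ enters. The same computation yields $\partial \ln f_x[y]/\partial w_i = -G_i(x,y)$, so differentiating $L$ under the integral (again licensed by A3) gives $\frac{\partial L}{\partial w_i} = \int_{\mathcal D_X}dx'\sum_{y'}G_i(x',y')\,[\PP(x',y')-\PP(x')\PP(y')]$.

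Substituting both expressions into the first-order formula, the two minus signs — one from the softmax derivative, one from $\Delta w_i = -\eta\,\partial L/\partial w_i$ — cancel, leaving $\Delta_y(x) = f_x[y]\cdot\eta\int_{\mathcal D_X}dx'\sum_{y'}[\PP(x',y')-\PP(x')\PP(y')]\sum_i G_i(x,y)G_i(x',y')$, which is the claim. The main delicacy is not the algebra but the analytic bookkeeping that makes the first-order step legitimate: A2 supplies the update direction, while A3 justifies both the Taylor expansion in $\eta$ (dropping the $O(\eta^2)$ terms) and the interchange of differentiation with the integral over $\mathcal D_X$. Keeping the sign and index conventions of $G_i(x,y)$ consistent between the softmax Jacobian and $\partial L/\partial w_i$ is the one place where an error would silently propagate, so I would verify that single cancellation of the two minus signs with care.
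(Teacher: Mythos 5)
Your proposal is correct and follows essentially the same route as the paper's proof: first-order expansion $\Delta_y(x)=\sum_i\frac{\partial f_x[y]}{\partial w_i}\,\delta w_i$, the SGD update for $\delta w_i$, and the softmax Jacobian producing the factor $G_i$. The only cosmetic difference is the sign convention in the softmax derivative (the paper's computation effectively uses $e^{-g_y}$ so that $\partial f_x[y]/\partial w_i=+f_x[y]G_i(x,y)$, whereas you get $-f_x[y]G_i(x,y)$), which is immaterial since $G_i$ enters the final formula quadratically, as you correctly note.
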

\begin{proof}

We need to take into account the actual form of activation function, i.e., the softmax function, as well as the SGD algorithm to demonstrate the correctness of this lemma. The variation $\Delta_{y_0}(x_0)$ is caused by the change in network parameters $\{w_i\}$, i.e.,
\begin{equation}\label{eq:Delta_1}
    \Delta_{y_0}(x_0) = \sum_{i\in[K]} \frac{\partial f_{x_0}[y_0]}{\partial w_i}\delta w_i,
\end{equation}
where $\delta w_i$ are determined by the SGD algorithm
\begin{align*}
	\delta w_i =& -\eta \frac{\partial\mathbb{E}_{\mathcal D}\left[\ell(f(X), Y) + \ell_{\text{CR}}\right]}{\partial w_i}\\
	=& \eta\sumint_{x,y}\frac{\PP(x,y)-\PP(x)\PP(y)}{f_{x}[y]}\frac{\partial f_{x}[y]}{\partial w_i}.
\end{align*}
Plugging back to (\ref{eq:Delta_1}) yields
\[
	\Delta_{y_0}(x_0) = \eta\sumint_{x,y}\frac{\PP(x,y)-\PP(x)\PP(y)}{f_{x}[y]} \sum_{i\in[K]}\frac{\partial f_{x_0}[y_0]}{\partial w_i}\frac{\partial f_{x}[y]}{\partial w_i}.
\]
To proceed, we need to expand $\frac{\partial f_{x}[y]}{\partial w_i}$. Taking into account the activation function, one has
\[
f_{x}[y] = \frac{\exp(g_y(x))}{\sum_{y'\in[K]}{\exp(g_{y'}(x))}},
\]
where $g_y(x)$ refers to the network output before passed to the activation function. Recall that, by our assumption, derivatives $\frac{\partial f(x;w)}{\partial w_i}$ are not singular. Now we have
\begin{align*}
	\frac{\partial f_{x}[y]}{\partial w_i} =& \frac{\partial e^{-g_y(x)}}{\partial w_i}\frac{1}{\sum_{y'\in[K]}e^{-g_{y'}(x)}} + e^{-g_y(x)}\frac{\partial}{\partial w_i}\left(\frac{1}{\sum_{y'\in[K]}e^{-g_{y'}(x)}}\right)\\
	=& \frac{-e^{-g_y(x)}}{\sum_{y'\in[K]}e^{-g_{y'}(x)}}\frac{\partial g_y(x)}{\partial w_i} + \frac{e^{-g_y(x)}}{\left(\sum_{y''\in[K]}e^{-g_{y''}(x)}\right)^2} \sum_{y'\in[K]}e^{-g_{y'}(x)}\frac{\partial g_{y'}(x)}{\partial w_i}\\
	=& f_{x}[y]\left[-\frac{\partial g_y(x)}{\partial w_i} + \sum_{y'\in[K]}f_x[y']\frac{\partial g_{y'}(x)}{\partial w_i}\right].
\end{align*}
For simplicity, we can rewrite the above result as
\[
	\frac{\partial f_{x}[y]}{\partial w_i} = f_{x}[y]G_i(x,y),
\]
where
\[
	G_i(x,y) := -\frac{\partial g_y(x)}{\partial w_i} + \sum_{y'}f_x[y']\frac{\partial g_{y'}(x)}{\partial w_i}
\]
is a smooth function.

Combining all the above gives $\Delta_{y_0}(x_0)$ as follows.
\[
	\Delta_{y_0}(x_0) = f_{x_0}[y_0]\cdot \eta\sumint_{x,y}\left[\PP(x,y)-\PP(x)\PP(y)\right]\sum_i G_i(x_0,y_0)G_i(x,y)
\]
\end{proof}

\subsection{Proof for Theorem \ref{Pro:alpha_general}}\label{proof:thm1}
\begin{customthm}{2}
The sample sieve defined in (\ref{Eq:data_sel}) ensures that clean examples $(x_n,\tilde y_n = y_n)$ will not be identified as being corrupted if the model $f^{(t)}$'s prediction on $x_n$ is better than random guess.   
\end{customthm}
\begin{proof}
Let $y_n$ be the true label corresponding to feature $x_n$.
For a clean sample, we have $\tilde y_n =y_n$.
Consider an arbitrary DNN model $f$.
With the CE loss, we have $\ell(f(x_n),y_n) = -\ln(f_{x_n}[y_n])$. According to Equation (\ref{Eq:data_sel}) in the paper, the necessary and sufficient condition of $v_n>0$ is 
\begin{equation*}
    \begin{split}
         \ell(f(x_n), \tilde y_n) + \ell_{\text{CR}}(f(x_n))<\alpha_{n} & \Leftrightarrow -\ln(f_{x_n}[y_n]) < -\frac{1}{K}\sum_{y\in[K]}\ln(f_{x_n}[y]) \\
         & \Leftrightarrow -\ln(f_{x_n}[y_n]) < -\frac{1}{K-1}\sum_{y\in[K],y\ne y_n}\ln(f_{x_n}[y]).
    \end{split}
\end{equation*}

By Jensen's inequality we have
$$
-\ln\left(\frac{1-f_{x_n}[y_n]}{K-1}\right) = -\ln\left(\frac{\sum_{y\in[K],y\ne y_n} f_{x_n}[y]}{K-1}\right) \le -\frac{1}{K-1}\sum_{y\in[K],y\ne y_n}\ln(f_{x_n}[y]).
$$
Therefore, when (sufficient condition) $$-\ln(f_{x_n}[y_n])<-\ln\left(\frac{1-f_{x_n}[y_n]}{K-1}\right) \Leftrightarrow f_{x_n}[y_n]>\frac{1}{K},$$
we have $v_n>0$. Inequality $f_{x_n}[y_n]>\frac{1}{K}$ indicates the model prediction is better than random guess.

\end{proof}

\subsection{Proof for Theorem \ref{col:beta}}\label{proof:cor2}

Before proving Theorem \ref{col:beta}, we need to show the effect of adding Term-2 to Term-1 in (\ref{Eq:newloss_exp}).
Let $\epsilon_X<0.5$ be the measure of separation among classes w.r.t feature $X$ in distribution $\mathcal D$, i.e., $\PP(Y=Y^{*}|X) = 1-\epsilon_X, (X,Y)\sim \mathcal D$, where $Y^{*}:= \argmax_{i\in[K]} \PP(Y=i|X)$ is the Bayes optimal label.
Let $\mathcal D'$ be the shifted distribution by adding Term-2 to Term-1 and $Y'$ be the shifted label. Then $\PP(X|Y) = \PP(X|Y'), \forall (X,Y)\sim \mathcal D, (X,Y')\sim \mathcal D'$ but $\PP(Y')$ may be different from $\PP(Y)$. 
Lemma \ref{cor:1} shows the invariant property of this label shift.
\begin{lem}\label{cor:1}
Label shift does not change the Bayes optimal label of feature $X$ when 
$ \epsilon_X < \min_{\forall i,j \in[K]}\left(\frac{T_{jj}}{T_{ii} + T_{jj}}\right)$.
\end{lem}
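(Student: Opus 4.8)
The plan is to make the shifted distribution $\mathcal{D}'$ fully explicit and then compare its Bayes optimal label with that of $\mathcal D$ via Bayes' rule. First I would combine Term-1 and Term-2 of (\ref{Eq:newloss_exp}). Using $\underline{T}+\Delta_j = T_{jj}$ together with the definition of $\E_{\mathcal D_\Delta}$, the sum Term-1 $+$ Term-2 collapses to $\sum_{j\in[K]} T_{jj}\,\PP(Y=j)\,\E_{\mathcal D|Y=j}[\ell(f(X),j)]$. This identifies $\mathcal D'$ as the distribution whose class-conditionals are unchanged, $\PP(X|Y'=j)=\PP(X|Y=j)$, but whose prior is reweighted to $\PP(Y'=j)\propto T_{jj}\,\PP(Y=j)$ (normalized by $Z=\sum_k T_{kk}\PP(Y=k)$). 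The degenerate case $\bar\Delta=0$ forces all nonzero-prior $T_{jj}$ to be equal, so $\mathcal D'=\mathcal D$ up to a harmless global scale and the claim is immediate; the substantive case is $\bar\Delta>0$.

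Second, applying Bayes' rule under $\mathcal D'$ and using that the class-conditionals coincide with those of $\mathcal D$, I would obtain, for each fixed $X$, the proportionality $\PP(Y'=i|X)\propto T_{ii}\,\PP(Y=i|X)$, where the $X$-dependent normalizer is common to all $i$ and hence irrelevant to the $\argmax$. The Bayes optimal label under $\mathcal D'$ is therefore $\argmax_{i} T_{ii}\,\PP(Y=i|X)$, and the lemma reduces to showing this coincides with $Y^{*}=\argmax_i \PP(Y=i|X)$.

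Third, I would verify that the $\argmax$ is preserved. Writing $\PP(Y=Y^{*}|X)=1-\epsilon_X$, every competitor obeys $\PP(Y=j|X)\le \epsilon_X$ for $j\ne Y^{*}$, since the off-optimal mass sums to $\epsilon_X$. It then suffices that $T_{Y^{*}Y^{*}}(1-\epsilon_X) > T_{jj}\,\epsilon_X$ for every $j\ne Y^{*}$, which rearranges to $\epsilon_X < \frac{T_{Y^{*}Y^{*}}}{T_{Y^{*}Y^{*}}+T_{jj}}$. Taking the worst case over the (unknown) identity of $Y^{*}$ and of the competitor $j$ gives the uniform sufficient condition $\epsilon_X < \min_{i,j}\frac{T_{ii}}{T_{ii}+T_{jj}}$, which equals the stated bound $\min_{i,j}\frac{T_{jj}}{T_{ii}+T_{jj}}$ because the minimum over all ordered index pairs is symmetric under swapping $i$ and $j$.

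The main obstacle is the bookkeeping in the first step: correctly collapsing Term-1 $+$ Term-2 into a single reweighted clean expectation and recognizing that the reweighting acts \emph{only} on the prior while leaving the class-conditionals intact. Once the effective prior weight $T_{jj}\,\PP(Y=j)$ is isolated, the remainder is a short Bayes-rule computation plus the elementary inequality above; the only points requiring care are the bound $\PP(Y=j|X)\le\epsilon_X$ and the symmetry argument identifying the two equivalent forms of the threshold.
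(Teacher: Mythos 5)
Your proposal is correct and follows essentially the same route as the paper's proof: it identifies Term-1 $+$ Term-2 as a label shift that reweights the prior to $\PP(Y'=j)\propto T_{jj}\PP(Y=j)$ while keeping the class-conditionals fixed, applies Bayes' rule to get $\PP(Y'=i|X)\propto T_{ii}\PP(Y=i|X)$, and derives the threshold from $T_{Y^*Y^*}(1-\epsilon_X)>T_{jj}\epsilon_X$, which matches the paper's sufficient condition $\epsilon_X\PP(Y'=i)/\PP(Y=i)<(1-\epsilon_X)\PP(Y'=j)/\PP(Y=j)$. Your explicit collapse of the two terms into $\sum_j T_{jj}\PP(Y=j)\,\E_{\mathcal D|Y=j}[\ell(f(X),j)]$ and your treatment of the degenerate case $\bar\Delta=0$ are minor additions in presentation, not a different argument.
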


\begin{proof}
Consider the shifted distribution $\mathcal D'$. Let
\[
\underline{T} \mathbb{E}_{\mathcal D}[\ell(f(X), Y)] + \bar \Delta \mathbb{E}_{\mathcal D_{\Delta}}[\ell(f(X), Y)] =    C\E_{\mathcal D'}[\ell(f(X),Y)],
\]
where
\[
   \E_{\mathcal D'}[\ell(f(X),Y)]:= \sum_{j \in [K]} \mathbb P(Y'=j)  \mathbb{E}_{\mathcal D' |{Y'}=j}[\ell(f(X), j)],
\]
and
\[
\PP(Y' = j):=\frac{T_{jj}\mathbb P(Y=j)}{C},
\]
where $C:= \sum_{j\in[K]} T_{jj}\mathbb P(Y=j)$ is a constant for normalization. 
For each possible $Y=i$, we have $\PP(Y=i|X) \in [0,\epsilon_X] \cup \{1-\epsilon_X\}, \epsilon_X < 0.5.$
Thus $$\PP(X|Y=i) = \frac{\PP(Y=i|X) \PP(X)}{\PP(Y=i)}
\in [0, \frac{\epsilon_X\PP(X)}{\PP(Y=i)}] \cup \{\frac{\PP(X)(1-\epsilon_X)}{\PP(Y=i)}\}.$$
Compare $\mathcal D'$ and $\mathcal D$, we know there is a label shift \citep{alexandari2019maximum,storkey2009training}, where $\PP(X|Y=i) = \PP(X|Y'=i)$ but $\PP(Y)$ and $\PP(Y')$ may be different.
To ensure the label shift does not change the Bayes optimal label, we need
\[
Y^{*} = \argmax_{i\in[K]} \PP(Y'=i|X) = \argmax_{i\in[K]} \frac{\PP(X|Y'=i) \PP(Y'=i) }{\PP(X)}, ~  (X,Y')\sim\mathcal D.
\]
One sufficient condition is
\[
     \frac{\epsilon_X\PP(Y'=i)}{\PP(Y=i)} < \frac{(1-\epsilon_X) \PP(Y'=j)}{\PP(Y=j)} \Rightarrow \epsilon_X < \min_{\forall i,j \in[K]}\left(\frac{T_{jj}}{T_{ii} + T_{jj}}\right)
\]
\end{proof}

With Lemma \ref{cor:1}, Assumption \ref{Ap1}, and Assumption \ref{Ap3}, we present the proof for Theorem \ref{col:beta} as follows.

\begin{customthm}{4}
(Robustness of the Confidence Regularized CE Loss)
With Assumption \ref{Ap1} and \ref{Ap3}, when
\[\max_{i,j\in [K],X \sim \mathcal D_X}~ \frac{{U_{ij}}(X)}{\PP(\widetilde Y=j)} \le 
\beta 
\le  \min_{\PP(\widetilde Y = i) > \PP(\widetilde Y = j), X\sim \mathcal D_X} \frac{T_{ii}(X) - T_{ij}(X)}{\PP(\widetilde Y = i) - \PP(\widetilde Y = j)},
\]
minimizing $\E_{\widetilde{\mathcal D}}[\ell(f(X),\widetilde{Y}) + \ell_{\text{CR}}(f(X))]$ is equivalent to minimizing $\E_{{\mathcal D}}[\ell(f(X),{Y})]$.
\end{customthm}

\begin{proof}
It is easy to check $\epsilon_X = 0, \forall X\sim \mathcal D_X$ when Assumption \ref{Ap1} holds. Thus adding Term-2 to Term-1 in (\ref{Eq:newloss_exp}) does not change the Bayes optimal label.
With Assumption \ref{Ap1}, the Bayes optimal classifier on the clean distribution should satisfy $f^*(X)[Y] = 1, \forall (X,Y)\sim \mathcal D$.
On one hand, when $\beta \ge \max_{i,j\in [K],X\sim \mathcal D_X}~ {U_{ij}}(X)/{\PP(\widetilde Y=j)}$, we have
$$
\beta_{ij}(X) := U_{ij}(X) - \beta  \PP(\widetilde Y = j) \le 0, \forall i,j\in[K], X\sim\mathcal{D}_X.
$$
In this case, minimizing the regularization term results in confident predictions.
On the other hand, to make it unbiased to clean results, $\beta$ could not be arbitrarily large. We need to find the upper bound on $\beta$ such that $f^*$ also minimizes the loss defined in the latter regularization term. 
Assume there is no loss on confident true predictions and there is one miss-prediction on example $(x_n,y_n = j_1)$, i.e., the prediction changes from the Bayes optimal prediction $f_{x_n}[j_1] = 1$ to $f_{x_n}[j_2] = 1, j_2 \ne j_1$.
Compared to the optimal one, the first two terms in the right side of (\ref{Eq:newloss_exp}) is increased by $T_{j_1,j_1}\ell_0$, where $\ell_0>0$ is the regret of one confident wrong prediction.
Accordingly, the last term in the right side of (\ref{Eq:newloss_exp}) is increased by $(\beta_{j_1,j_1}(X) - \beta_{j_1,j_2}(X)) \ell_0$. It is supposed that
$$
T_{j_1,j_1}\ell_0 + (\beta_{j_1,j_1}(x_n) - \beta_{j_1,j_2}(x_n)) \ell_0 \ge 0, \forall j_1,j_2 \in [K],
$$
which is equivalent to 
$$
\beta (\PP(\widetilde Y=j_1) - \PP(\widetilde Y=j_2)) \le   T_{j_1,j_1}(x_n) - T_{j_1,j_2}(x_n), \forall j_1,j_2 \in [K].
$$
Thus 
$$
\beta 
\le  \min_{\PP(\widetilde Y = j_1) > \PP(\widetilde Y = j_2), X\sim\mathcal D_X} \frac{ T_{j_1,j_1}(X) - T_{j_1,j_2}(X)}{\PP(\widetilde Y = j_1) - \PP(\widetilde Y = j_2)}.
$$
By mathematical inductions, it can be generalized to the case with multiple miss-predictions in the CE term.
\end{proof}

\section{Other Justifications}\label{supp:otherthms}

In this section, we first compare $\ell_{\text{CR}}$ and entropy regularization in Section \ref{supp:compare} and highlight our superiority with both theoretical and experimental evidence, then show an example for explaining the variances incurred by label noise in Section \ref{supp:cal_var}, and provide the risk bound in Section \ref{proof:cor3} for training with the sieved examples that satisfy Corollary \ref{cor:division}.

\subsection{Comparing $\ell_{\text{CR}}$ with Entropy Regularization}\label{supp:compare}
 For simplicity, we consider two-class classification problem. Suppose for a given feature $x$, the probability of $x$ belonging to class $1$ is $p$. The entropy regularization (ER) can be written as:
\begin{equation}\label{entropy_reg_2_class}
    R_{\text{ER}}(p) = -(p\ln p + (1-p)\ln (1-p)),
\end{equation}
while our regularization term is written as:
\begin{equation}\label{our_reg_2_class}
    R_{\text{CR}}(p) = \ln p + \ln (1-p).
\end{equation}
We have the following proposition:

\begin{prop}\label{Pro:compare_entropy}
$\ell_{\text{CR}}$ regularizes models stronger than the entropy regularization in terms of gradients.
\end{prop}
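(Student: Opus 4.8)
The plan is to make the informal phrase ``stronger in terms of gradients'' precise as the pointwise inequality $|R_{\text{CR}}'(p)| \ge |R_{\text{ER}}'(p)|$ for every $p \in (0,1)$, and then to verify it by a direct one-variable analysis. First I would differentiate the two scalar regularizers in (\ref{entropy_reg_2_class}) and (\ref{our_reg_2_class}) with respect to the class-$1$ probability $p$, obtaining $R_{\text{ER}}'(p) = \ln\frac{1-p}{p}$ and $R_{\text{CR}}'(p) = \frac{1-2p}{p(1-p)}$. Both derivatives vanish at $p=1/2$, and since $|R_{\text{ER}}'|$ and $|R_{\text{CR}}'|$ are invariant under $p \mapsto 1-p$, it suffices to establish the inequality on the half-interval $p \in (0, 1/2)$, where both derivatives are strictly positive.

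Next I would reduce the comparison to a clean scalar inequality via the substitution $x := \frac{1-p}{p}$, which ranges over $(1,\infty)$ as $p$ ranges over $(0,1/2)$. A short computation gives $R_{\text{ER}}'(p) = \ln x$ and $R_{\text{CR}}'(p) = x - \frac{1}{x}$, so the desired bound $R_{\text{CR}}'(p) \ge R_{\text{ER}}'(p)$ becomes exactly $x - \frac{1}{x} \ge \ln x$ for $x \ge 1$. This I would prove by a standard monotonicity argument: at $x=1$ both sides equal $0$, and the derivative of the difference is $1 + \frac{1}{x^2} - \frac{1}{x} = \frac{x^2 - x + 1}{x^2}$, whose numerator has negative discriminant and is therefore strictly positive, so the difference is strictly increasing and hence nonnegative on $[1,\infty)$, with equality only at $x=1$ (that is, $p=1/2$).

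Finally I would transfer the result back: the substitution identity shows $|R_{\text{CR}}'(p)| \ge |R_{\text{ER}}'(p)|$ on $(0,1/2)$, and the symmetry noted above extends it to $(1/2,1)$, completing the proof. I expect the only real obstacle to be pinning down the right reading of ``stronger''; once the claim is framed as the pointwise gradient-magnitude comparison, the algebra is routine, and the substitution $x=(1-p)/p$ is exactly what turns the awkward inequality $\frac{1-2p}{p(1-p)} \ge \ln\frac{1-p}{p}$ into the transparent $x - \frac{1}{x} \ge \ln x$. I would also remark that the two gradients blow up at the boundaries at genuinely different rates, $R_{\text{CR}}'$ like $1/p$ versus $R_{\text{ER}}'$ only like $\ln(1/p)$ as $p\to 0^+$, which both corroborates the inequality and explains \emph{why} $\ell_{\text{CR}}$ exerts a qualitatively stronger push toward confident predictions near the vertices of the simplex.
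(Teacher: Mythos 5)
Your proposal is correct and follows essentially the same route as the paper: both make ``stronger'' precise as a pointwise comparison of gradient magnitudes, exploit the symmetry about $p=1/2$ to restrict to $(0,1/2)$, and compute $R_{\text{ER}}'(p)=\ln\frac{1-p}{p}$ and $R_{\text{CR}}'(p)=\frac{1}{p}-\frac{1}{1-p}$. The only (cosmetic) difference is in the final elementary step: the paper inserts the intermediate quantity $\frac{1}{p}-2$ and uses $\ln t \le t-1$ together with $p<1/2$, whereas you substitute $x=(1-p)/p$ and prove $x-\frac{1}{x}\ge \ln x$ by monotonicity --- both are routine and valid.
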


\begin{proof}

First notice that both $R_{\text{ER}}$ and $R_{\text{CR}}$ are symmetric functions around $p=0.5$. Thus we can only consider the situation where $0<p<0.5$. The gradients w.r.t $p$ are:
\begin{equation*}
   \frac{\partial R_{\text{ER}}(p)}{\partial p} = -(\ln p - \ln (1-p)) = \ln (\frac{1}{p} - 1),
\end{equation*}
and
\begin{equation*}
   \frac{\partial R_{\text{CR}}(p)}{\partial p} = \frac{1}{p} - \frac{1}{1-p}.
\end{equation*}

Now we compare the absolute value of two gradients.
When $0<p<0.5$, it is easy to check 
\begin{equation*}
    \frac{\partial R_{\text{ER}}(p)}{\partial p} = \ln (\frac{1}{p} - 1) < \frac{1}{p} - 2 < \frac{1}{p} - \frac{1}{1-p} = \frac{\partial R_{\text{CR}}(p)}{\partial p},
\end{equation*}
and both gradients are larger than $0$.
Therefore, $\ell_{\text{CR}}$ has larger gradients than the entropy regularization, i.e., $\ell_{\text{CR}}$ has stronger regularization ability than ER.

\end{proof}

We can also draw a figure to show this phenomenon. Figure \ref{fig_value_p} shows the value of $R_{\text{CR}}$ and $R_{\text{ER}}$ with respect to $p$. We can see the gradient of our regularization is larger than entropy regularization, resulting in a more confident prediction.
We also perform an experiment to further show the evidence. Table \ref{table:reg_compare} records comparison results which show our regularization achieves higher accuracy compared to the entropy term.

\begin{figure*}[!t]
	\centering
	\includegraphics[width=0.7\textwidth]{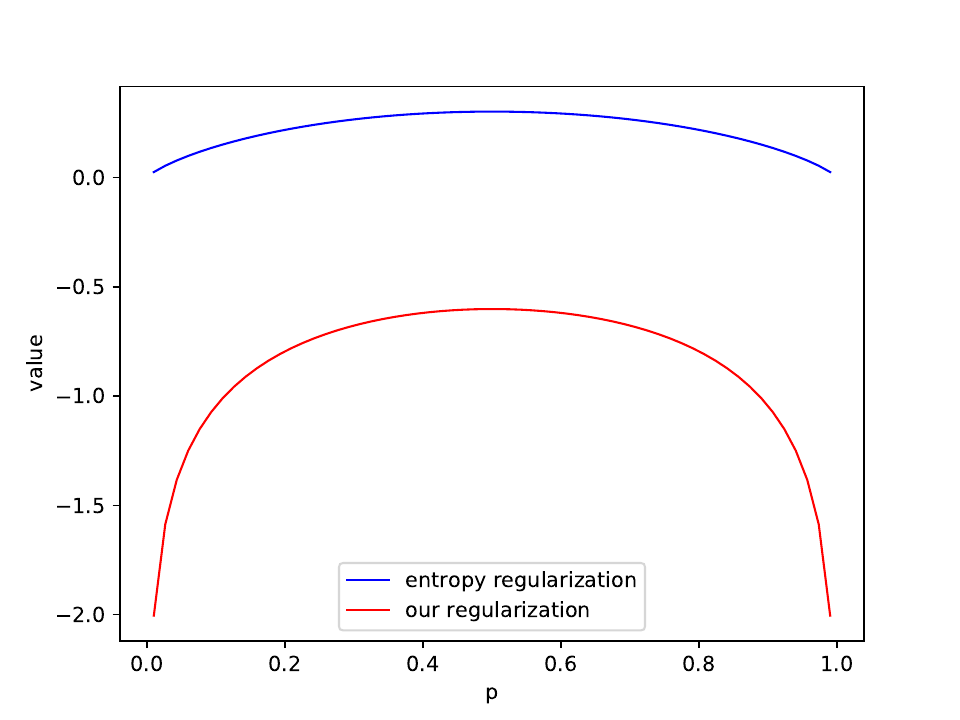}
	\caption{Comparing our regularization  with entropy regularization .}
	\label{fig_value_p}
\end{figure*}

\begin{table*}[!t]
	\setlength{\belowcaptionskip}{0.5cm}
	\caption{Comparing $\ell_{\text{CR}}$ with ER on CIFAR-10.}
	\begin{center}
		\begin{tabular}{c|ccccccc} 
			\hline 
			 \multirow{2}{*}{Method}  & \multicolumn{3}{c}{\emph{Symm}} &    \multicolumn{3}{c}{\emph{Asymm}}  \\ 
			 & 0.2&0.4&0.6&0.1&0.2&0.3\\
			\hline
			\hline
				Baseline&86.98&81.88&74.14&90.69&88.59&86.14\\
				Baseline + ER&
				  87.61&83.84  &80.55  &91.36  &89.61  &87.47  \\
                Baseline + $\ell_{\text{CR}}$&
               \bf {90.70}&\bf{88.29}&\bf{82.10}&\bf{92.41}&\bf{91.02}&\bf{90.53}\\
			\hline
		\end{tabular}\\
	\end{center}
	\label{table:reg_compare}
\end{table*}

\subsection{Calculating ${\sf var}_{\mathcal D}(\ell(f^*_{\mathcal D}(X),{Y}))$ and ${\sf var}_{\widetilde{\mathcal D}}[\ell(f^*_{\mathcal D}(X),\widetilde{Y}) + \ell_{\text{CR}}(f^*_{\mathcal D}(X))]$}\label{supp:cal_var}

Consider optimal classifier $f^*_{\mathcal D}:=\argmin_{f}\E_{{\mathcal D}}[\ell(f(X),{Y})]$.
Let $\ell_{\text{max}}$ be the upper bound of the $\ell(\cdot)$ loss, and $\ell_{\text{min}}$ be the lower bound of the $\ell(\cdot)$ loss.
Denote $\varepsilon$ by the over noise rate (ratio of corrupted examples in all examples).

For ${\sf var}_{\mathcal D}(\ell(f^*_{\mathcal D}(X),{Y}))$, we know the loss $\ell(f^*_{\mathcal D}(x_n),{y_n}) = \ell_{\text{min}}$ for each example.
Thus the variance is ${\sf var}_{\mathcal D}(\ell(f^*_{\mathcal D}(X),{Y})) = 0$.

For ${\sf var}_{\widetilde{\mathcal D}}[\ell(f^*_{\mathcal D}(X),\widetilde{Y}) + \ell_{\text{CR}}(f^*_{\mathcal D}(X))]$, we know the loss $\ell(f^*_{\mathcal D}(x_n),{\tilde y = y_n}) = \ell_{\text{min}}$, and the loss $\ell(f^*_{\mathcal D}(x_n),{\tilde y \ne y_n}) = \ell_{\text{max}}$.
Note $$\ell_{\text{CR}} = \frac{(K-1)\ell_{\text{max}}+\ell_{\text{min}}}{K}$$ for each example.
The expectation is
\[
\E_{\widetilde{\mathcal D}}[\ell(f^*_{\mathcal D}(X),\widetilde{Y}) + \ell_{\text{CR}}(f^*_{\mathcal D}(X))] = \varepsilon \ell_{\text{max}} + (1-\varepsilon) \ell_{\text{min}} + \ell_{\text{CR}}.
\]
Thus the variance is
\begin{align*}
  & {\sf var}_{\widetilde{\mathcal D}}[\ell(f^*_{\mathcal D}(X),\widetilde{Y}) + \ell_{\text{CR}}(f^*_{\mathcal D}(X))] \\
= &\varepsilon(\ell_{\text{max}} + \ell_{\text{CR}} - ( \varepsilon \ell_{\text{max}} + (1-\varepsilon) \ell_{\text{min}} + \ell_{\text{CR}}))^2 + (1-\varepsilon)(\ell_{\text{min}} + \ell_{\text{CR}}- ( \varepsilon \ell_{\text{max}} + (1-\varepsilon) \ell_{\text{min}}+ \ell_{\text{CR}}))^2    \\
= &\varepsilon(1-\varepsilon)(\ell_{\text{max}} - \ell_{\text{min}})^2.
\end{align*}

We know in this example, 
\[
{\sf var}_{\widetilde{\mathcal D}}[\ell(f^*_{\mathcal D}(X),\widetilde{Y}) + \ell_{\text{CR}}(f^*_{\mathcal D}(X))] = \varepsilon(1-\varepsilon)(\ell_{\text{max}} - \ell_{\text{min}})^2 \gg {\sf var}_{\mathcal D}(\ell(f^*_{\mathcal D}(X),{Y})) = 0.
\]

\subsection{Analysis for the Risk Bound}\label{proof:cor3}

Let $\widetilde D_{L^*}$ and $\mathcal{\widetilde D}_{L^*}$ be the set and the distribution of the sieved clean examples according to Corollary \ref{cor:division}.
We know they are supposed to contain only clean examples.
Define $R_{\mathcal D}(f) := \E_{\mathcal D}[\ell(f(X),Y)]$, $f^*_{\mathcal D}:=\argmin_{f}R_{\mathcal D}(f)$, $\widehat R_{\widetilde D_{L^*},\gamma}(f) := \frac{1}{|L^*|}\sum_{n\in L^*}[\gamma(x_n)\ell(f(x_n),\tilde y_n)]$, $\hat f_{\widetilde D_{L^*},\gamma}:=\argmin_{f\in \mathcal F}\widehat R_{\widetilde D_{L^*}, \gamma}(f)$, where $\gamma(X):={\PP_{\mathcal D}(X)}/{\PP_{\mathcal{\widetilde D}_{L^*}}(X)}$ stands for the importance of each example to correct sample bias such that $R_{\mathcal D}(f) = \E_{\widetilde{\mathcal D}_{L^*}}[\gamma(X)\ell(f(X),\widetilde Y)]$.
The weight $\gamma(X)$ can be estimated by kernel mean matching \citep{huang2007correcting} and its DNN adaption \citep{fang2020rethinking}.
Let $\widetilde{\mathcal D}_{L^*,X}$ be the marginal distribution of $\widetilde{\mathcal D}_{L^*}$ on $X$. For example, with a particular kernel $\Phi(X)$, the optimization problem is:
\begin{align*}
    \min_{\gamma(X)}& \qquad \| \E_{\mathcal D_X} [\Phi(X)] - \E_{\widetilde{\mathcal D}_{L^*,X}} [\gamma(X)\Phi(X)]\| \\
    \text{s.t.} & \qquad\gamma(X) > 0 \text{~~and~~} \E_{\widetilde{\mathcal D}_{L^*,X}}[\gamma(X)] = 1.
\end{align*}
Note the selection of kernel $\Phi(\cdot)$ is non-trivial, especially for complicated features. See \citep{fang2020rethinking} for a detailed DNN solutions.

Corollary \ref{cor:riskbound} provides a risk bound for minimizing CE after sample sieve.
\begin{cor}\label{cor:riskbound}
If $\gamma\cdot\ell$ is $[0,b]$-valued, then for any $\delta > 0$, with probability at least $1-\delta$, we have
\[ R_{\mathcal D}(\hat f_{\widetilde D_{L^*},\gamma}) - R_{\mathcal D}(f^*_{\mathcal D})  \le  2 \mathfrak{R}(\gamma \circ \ell \circ \mathcal F) + 2b\sqrt{\frac{\log(1/\delta)}{2|L^*|}}, \]
where the Rademacher complexity $\mathfrak{R}{(\gamma \circ \ell \circ \mathcal F)}:=\E_{\widetilde{\mathcal D}_{L^*},\bm\sigma}[\sup_{f\in\mathcal F} \frac{2}{|L^*|}\sum_{n\in L^*} \sigma_n \gamma(x_n)\ell(f(x_n),\tilde y_n)]$ and $\{\sigma_{n\in L^*}\}$ are independent Rademacher variables.
\end{cor}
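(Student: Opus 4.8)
The plan is to read this as a textbook excess-risk bound for reweighted empirical risk minimization, built from three ingredients: the reweighting identity, the optimality of the empirical minimizer, and a one-sided uniform-convergence estimate via Rademacher complexity. First I would invoke Corollary~\ref{cor:division}: under infinite model capacity and sufficiently many samples the sieve recovers exactly the clean indices, so the elements of $\widetilde D_{L^*}$ may be treated as $|L^*|$ i.i.d.\ draws from $\widetilde{\mathcal D}_{L^*}$. Together with the stated property of the importance weight $\gamma(X)=\PP_{\mathcal D}(X)/\PP_{\widetilde{\mathcal D}_{L^*}}(X)$, namely $R_{\mathcal D}(f)=\E_{\widetilde{\mathcal D}_{L^*}}[\gamma(X)\ell(f(X),\widetilde Y)]$, this identifies $R_{\mathcal D}(f)$ as the population counterpart of $\widehat R_{\widetilde D_{L^*},\gamma}(f)$, so that $\E[\widehat R_{\widetilde D_{L^*},\gamma}(f)]=R_{\mathcal D}(f)$ for every fixed $f$.

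Writing $\hat f:=\hat f_{\widetilde D_{L^*},\gamma}$ and $\widehat R:=\widehat R_{\widetilde D_{L^*},\gamma}$, I would next perform the usual add-and-subtract decomposition
\[
R_{\mathcal D}(\hat f)-R_{\mathcal D}(f^*_{\mathcal D})=\big(R_{\mathcal D}(\hat f)-\widehat R(\hat f)\big)+\big(\widehat R(\hat f)-\widehat R(f^*_{\mathcal D})\big)+\big(\widehat R(f^*_{\mathcal D})-R_{\mathcal D}(f^*_{\mathcal D})\big).
\]
The middle bracket is non-positive because $\hat f$ minimizes $\widehat R$ over $\mathcal F$ (and $f^*_{\mathcal D}\in\mathcal F$ in the infinite-capacity regime), the first bracket is at most $\sup_{f\in\mathcal F}\big(R_{\mathcal D}(f)-\widehat R(f)\big)$, and the third is at most $\sup_{f\in\mathcal F}\big(\widehat R(f)-R_{\mathcal D}(f)\big)$.

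It then remains to bound each one-sided supremum. Since $\gamma\cdot\ell$ is $[0,b]$-valued, replacing a single one of the $|L^*|$ samples changes either supremum by at most $b/|L^*|$, so McDiarmid's bounded-differences inequality gives, with probability at least $1-\delta$,
\[
\sup_{f\in\mathcal F}\big(R_{\mathcal D}(f)-\widehat R(f)\big)\le\E\Big[\sup_{f\in\mathcal F}\big(R_{\mathcal D}(f)-\widehat R(f)\big)\Big]+b\sqrt{\frac{\log(1/\delta)}{2|L^*|}},
\]
and identically for the reversed supremum. The symmetrization lemma then introduces i.i.d.\ Rademacher variables $\{\sigma_n\}$ and bounds each expected supremum by $\mathfrak{R}(\gamma\circ\ell\circ\mathcal F)$ exactly as defined, the factor $2$ in that definition absorbing the symmetrization constant. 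Adding the two one-sided bounds to the decomposition yields the claimed $2\mathfrak{R}(\gamma\circ\ell\circ\mathcal F)+2b\sqrt{\log(1/\delta)/(2|L^*|)}$.

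The main obstacle I anticipate is not the concentration machinery, which is entirely standard, but justifying that the sieved set may be treated as an i.i.d.\ sample so that McDiarmid and symmetrization legitimately apply. Because $L^*$ is selected using a trained model, the retained points are in general statistically dependent; the argument leans on Corollary~\ref{cor:division} to assert that $L^*$ coincides with the fixed, genuinely clean index set, decoupling the selection from the reweighted samples entering the risk. A secondary caveat worth stating explicitly is that $\gamma$ is taken to be the exact density ratio, whereas in practice it is estimated (e.g.\ by kernel mean matching), an additional source of error this bound does not account for.
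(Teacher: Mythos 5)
Your proof is correct and follows essentially the same route as the paper's: the importance-reweighting identity $R_{\mathcal D}(f)=\E_{\widetilde{\mathcal D}_{L^*}}[\gamma(X)\ell(f(X),\widetilde Y)]$, the standard three-term add-and-subtract decomposition with the middle term killed by the optimality of $\hat f_{\widetilde D_{L^*},\gamma}$, and a uniform deviation bound via Rademacher complexity plus bounded-differences concentration. Your version is in fact slightly more careful than the paper's (which loosely invokes ``Hoeffding's inequality'' for what is really a McDiarmid-plus-symmetrization argument, and does not flag the i.i.d.\ issue for the sieved set that you correctly note is resolved by Corollary~\ref{cor:division}).
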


\begin{proof}

The sieved clean examples may be biased due to the covariate shift caused by instance-based label noise. 
One solution to such shift is re-weighting $\widetilde{\mathcal D}_{L^*}$ to match $\mathcal D$ using importance re-weighting.
Particularly, we need to estimate parameters $\gamma(X)$ such that 
\[
R_{\mathcal D}(f) = R_{\widetilde{\mathcal D}_{L^*},\gamma}(f) := \E_{\widetilde{\mathcal D}_{L^*}}[\gamma(X)\ell(f(X),\widetilde Y)].
\]
With the optimal $\gamma(X)$, the ERM should be changed as 
\[
\hat f_{\widetilde D_{L^*},\gamma}:=\argmin_{f\in \mathcal F}\widehat R_{\widetilde D_{L^*},\gamma}(f),
\]
where
\[
\widehat R_{\widetilde D_{L^*},\gamma}(f) := \frac{1}{|L^*|}\sum_{n\in L^*}[\gamma(x_n)\ell(f(x_n),\tilde y_n)].
\]

Via Hoeffding’s inequality, $\forall f$, w.p. at least $1-\delta$, we have
\[
|\widehat R_{\widetilde { D}_{L^*},\gamma}(f) - R_{\widetilde {\mathcal D}_{L^*},\gamma}(f)| \le  \mathfrak{R}(\ell \circ \mathcal F) + 2b\sqrt{\frac{\ln(1/\delta)}{2|L^*|}}.
\]
Following the basic Rademacher bound \citep{bartlett2002rademacher} on the maximal deviation between the
expected empirical risks: 
\begin{equation*}
    \begin{split}
      &  R_{\mathcal D}(\hat f_{\widetilde D_{L^*},\gamma}) - R_{\mathcal D}(f^*_{\mathcal D}) \\
    = &  R_{\widetilde {\mathcal D}_{L^*},\gamma}(\hat f_{\widetilde D_{L^*},\gamma}) - R_{\widetilde {\mathcal D}_{L^*},\gamma}(f^*_{\widetilde{\mathcal D}_{L^*},\gamma}) \\
    = & \bigg[ \widehat R_{\widetilde{D}_{L^*},\gamma}(\hat f_{\widetilde{D}_{L^*},\gamma}) - \widehat R_{\widetilde{D}_{L^*},\gamma}(f^*_{\widetilde{\mathcal D}_{L^*},\gamma})
    + \left( R_{\widetilde {\mathcal D}_{L^*},\gamma}(\hat f_{\widetilde D_{L^*},\gamma}) 
    - \widehat R_{\widetilde{D}_{L^*},\gamma}(\hat f_{\widetilde{D}_{L^*},\gamma})\right)  \\
    & \quad + \left(\widehat R_{\widetilde{D}_{L^*},\gamma}(f^*_{\widetilde{\mathcal D}_{L^*},\gamma})
    - R_{\widetilde {\mathcal D}_{L^*},\gamma}(f^*_{\widetilde{\mathcal D}_{L^*},\gamma}) \right) \bigg] \\
  \le & 0 +  2\max_{f\in\mathcal F} |\widehat R_{\widetilde{ D}_{L^*},\gamma}(f) - R_{\widetilde{\mathcal D}_{L^*},\gamma}(f)| \\
  \le & 2 \mathfrak{R}(\gamma \circ \ell \circ \mathcal F) + 2b\sqrt{\frac{\ln(1/\delta)}{2|L^*|}},
    \end{split}
\end{equation*}
where the Rademacher complexity $\mathfrak{R}{(\gamma \circ \ell \circ \mathcal F)}:=\E_{\widetilde{\mathcal D}_{L^*},\bm\sigma}[\sup_{f\in\mathcal F} \frac{2}{|L^*|}\sum_{n\in L^*} \sigma_n \gamma(x_n)\ell(f(x_n),\tilde y_n)]$ and $\{\sigma_{n\in L^*}\}$ are independent Rademacher variables. Therefore, we get Corollary \ref{cor:riskbound}.

\end{proof}

Corollary \ref{cor:riskbound} informs us that, theoretically, the sample sieve is biased and $\gamma(X)$ is necessary to correct the selection bias. However, the error induced by estimating $\gamma(X)$ may degrade the performance. In addition, it is easy to check the optimal solution of performing direct ERM on the sieved clean examples is the same as $f^*_{\mathcal D}$ in expectation when Assumption \ref{Ap1} holds. %

\section{More Details and Results for Experiments}\label{supp:exps}

We firstly show our training framework in Section \ref{Secsup:pip}, then show implementation details and discussions in Section \ref{imp_detail}.
The algorithm for generating the instance-dependent label noise is provided in Section \ref{sec:instance_noise_gen}.
We show more experiments in Section \ref{more_exp} and the ablation study in Section \ref{sec_5_3}.

\subsection{Illustration of the Training Framework}\label{Secsup:pip}
Our experiments follows the framework shown in Figure \ref{fig:pip}.
\begin{figure}[!h]
    \centering
    \includegraphics[width=0.75\textwidth]{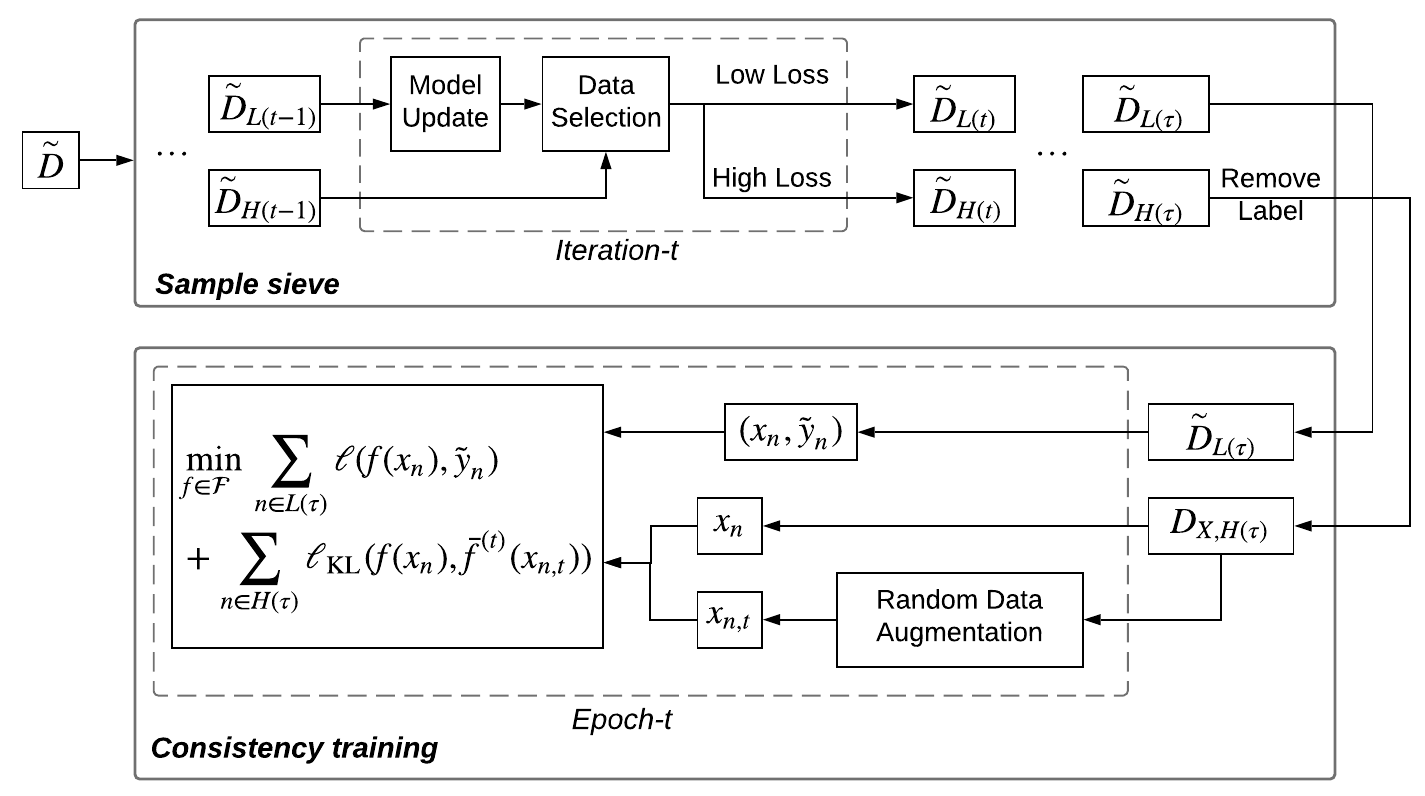}
    \vspace{-5pt}
    \caption{One example of \SPL. 
    $L(t)$: Indices of sieved clean examples. $H(t)$: Indices of sieved corrupted examples. $\widetilde D_{L(t)}:=\{(x_n,\tilde y_n):n\in L(t)\}$,
    $\widetilde D_{H(t)}:=\{(x_n,\tilde y_n):n\in H(t)\}$, $D_{X,H(\tau)}:=\{x_n:n\in H(\tau)\}$.}
    \label{fig:pip}
\end{figure}

\subsection{Implementation Details and More Analysis}\label{imp_detail}

\textbf{Implementation details on CIFAR-10 and CIFAR-100 with instance-based label noise:}  The basic hyper-parameters settings for CIFAR-10 and CIFAR-100 are listed as follows: mini-batch size (64), optimizer (SGD), initial learning rate (0.1), momentum (0.9), weight decay (0.0005), number of epochs (100) and learning rate decay (0.1 at 50 epochs). Standard data augmentation is applied to each dataset. \SPL{} and baseline share the same hyper-parameters setting except for $\alpha$ and $\beta$ in \eqref{Eq:phase1_prob}. When perform \SPL{}, We first train network on the dataset for 10 warm-up epochs with only CE (Cross Entropy) loss. Then $\beta$ is linearly increased from 0 to 2 for next 30 epochs and kept as 2 for the rest of the epochs. The data selection is performed at the 30 epoch and $\alpha_{n,t}$ is set to $\frac{1}{K}\sum_{\tilde{y}\in[K]}\ell(\bar f^{(t)}(x_{n}),\tilde y)   +  \ell_{\text{CR}}(\bar f^{(t)}(x_n))$ in epoch-$t$ as the paper suggests. 

When performing \SPL{}$^{\star}$, we used the sieved result at epoch-$40$. It is worth noting that at that time, the sample sieve may not reach the highest test accuracy. However, the division property brought by the confidence regularizer works well at that time. We use the default setting from UDA \citep{xie2019unsupervised} to apply efficient data augmentation.

\textbf{Implementation details on Clothing-1M:} We train the network for 120 epochs on 1 million noisy training images. Batch-size is set to 32. The initial learning rate is set as 0.01 and reduced by a factor of 10 at 30, 60, 90 epochs. For each epoch, we sample 1000 mini-batches from the training data while ensuring the (noisy) labels are balanced. Mixup strategy is  employed to further avoid the overfitting problem \citep{zhang2018mixup,Li2020DivideMix}. $\beta$ is set to 0 at first 80 epochs, and linearly increased to 0.4 for next 20 epochs and kept as 0.4 for the rest of the epochs. It is worth noting that Clothing-1M actually does not satisfy our Assumption \ref{Ap3} since the class ``Knitwear'' (denoted by class-$i$) and the class ``Sweater'' (denoted by class-$j$) can not satisfy $T_{ii}(X)-T_{ij}(X)>T_{ii}-T_{jj}$. Note consistency training is not implemented on Clothing-1M.

\textbf{More analysis on $\beta$:} The value of $\beta$ mainly affects the sample sieve in \SPL{}. From Theorem \ref{Thm:unbiased} and Theorem \ref{col:beta} in the paper, when $\beta$ is set to be small, we do not have the good division property. When $\beta$ is set to be large, the training is biased to the CE term. Figure \ref{fig_analysis_beta} visualize this phenomenon. It can be seen that in the left and right figure, many clean examples and corrupted examples overlap together located in the left and right clusters, respectively. %

\begin{figure*}[!t]
	\centering
	\includegraphics[width=1\textwidth]{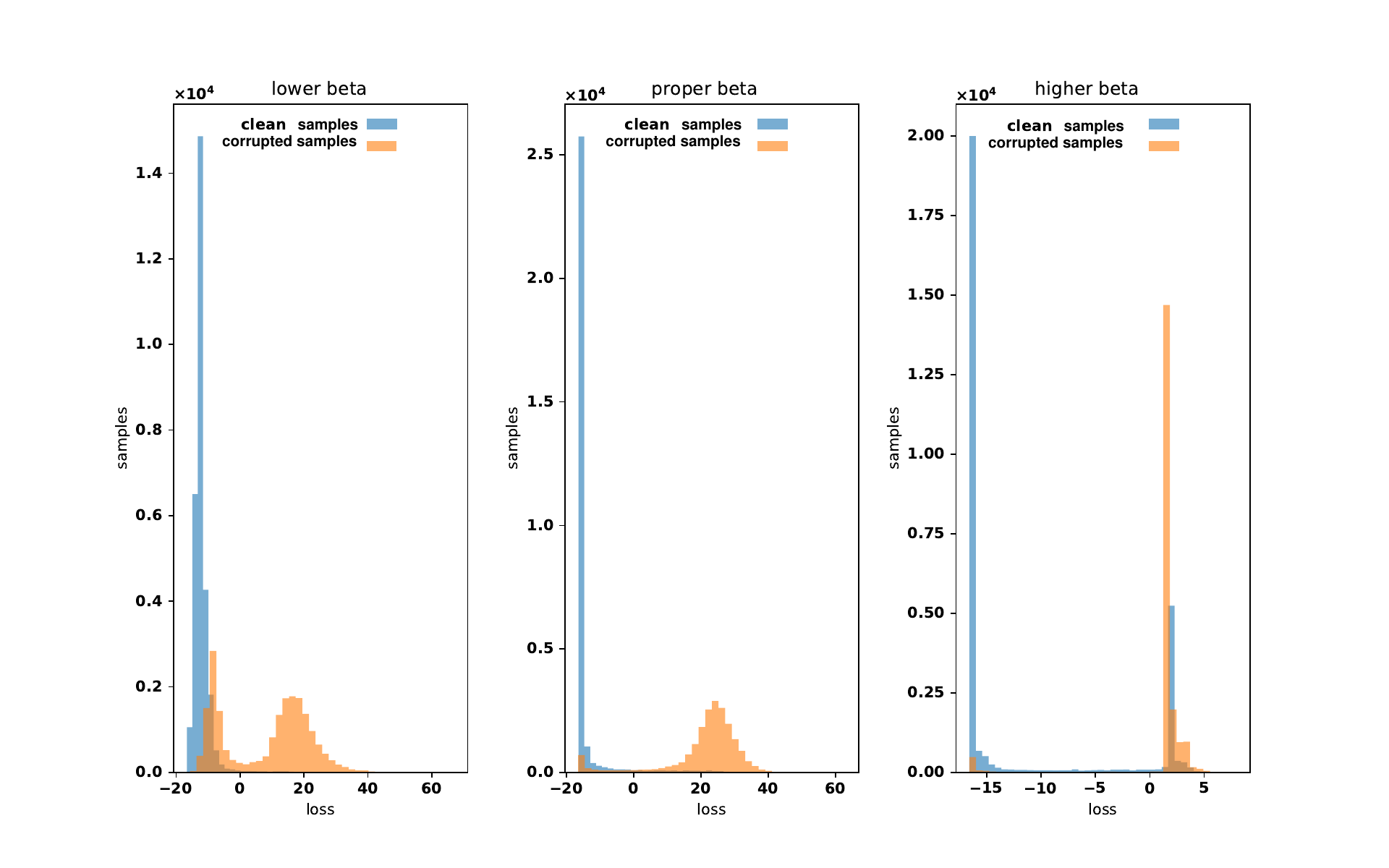}
	\vspace{-30pt}
	\caption{Analyzing how the value of $\beta$ influences the division. We set $\beta = 0.5, 2, 10$ for lower, proper, and higher beta settings, respectively.
	}
	\label{fig_analysis_beta}
	\vspace{10pt}
\end{figure*}

\subsection{Generating the Instance-Dependent Label Noise }\label{sec:instance_noise_gen}
In this section, we introduce how to generate instance-based label noise which is illustrated in Algorithm \ref{algorithm1}.
Note this algorithm follows the state-of-the-art method \citep{xia2020parts}.
Define the noise rate (the global flipping rate) as $\varepsilon$.
First,  in order to control $\varepsilon$ but without constraining all of the instances to have a same flip rate, we sample their flip rates from a truncated normal distribution $\mathbf{N}(\varepsilon, 0.1^{2}, [0, 1])$, where $[0,1]$ indicates the range of the truncated normal distribution.  Second, we sample parameters $W$ from the standard normal distribution for generating instance-dependent label noise. The size of $W$ is $ S\times K $, where $S$ denotes the length of each feature.
For each instance $(x_{n},y_n)$, we use Step 5 and Step 6 to ensure that the probability of getting a wrong label is $q_{n}$.
Step 7 ensures the sum of all the entries of $p$ is 1.

Suppose there are two features: ${x}_{i}$ and ${x}_{j}$ where ${x}_{i} = {x}_{j}$. Then the possibility $p$ of these two features, calculated by ${x}\cdot W$, from the Algorithm \ref{algorithm1}, would be exactly the same. Thus the label noise is strongly instance-dependent.

	\begin{algorithm*}[!t]
		\caption{Instance-Dependent Label Noise Generation}
		\label{algorithm1}
		\begin{algorithmic}[1]
			\renewcommand{\algorithmicrequire}{\textbf{Input:}}
			\renewcommand{\algorithmicensure}{\textbf{Iteration:}}
			\REQUIRE ~~\\
			 1: Clean examples ${({x}_{n},y_{n})}_{n=1}^{N}$; Noise rate: $\varepsilon$; Size of feature: $1\times S$; Number of classes: $K$.
			\ENSURE ~~\\
			2: Sample instance flip rates $q_n$ from the truncated normal distribution $\mathcal{N}(\varepsilon, 0.1^{2}, [0, 1])$;\\
            3: Sample   $W \in \mathcal{R}^{S \times K}$ from the standard normal distribution $\mathcal{N}(0,1^{2})$;\\
			\textbf{for} $n = 1$ to $N$  \textbf{do} \\
			4: \qquad $p = {x}_{n} \cdot  W$   ~~~~ \text{// \small Generate instance dependent flip rates. The size of $p$ is $1\times K$.} 
			\\
			5: \qquad  $p_{y_{n}} = -\infty$   ~~~~~\text{// \small Only consider entries different from the true label}\\
			6: \qquad $p = q_{n} \cdot \text{softmax}(p) $ ~~~~ \text{// \small Let $q_n$ be the probability of getting a wrong label}
			\\
			7: \qquad $p_{y_{n}} = 1 - q_{n}$ ~~~~\text{// \small Keep clean w.p. $1 - q_{n}$} 
			\\
			8: \qquad Randomly choose a label from the label space as noisy label $\tilde{y}_{n}$ according to $p$;
			\\
			\textbf{end for}\\
			
			\renewcommand{\algorithmicensure}{\textbf{Output:}}
			\ENSURE ~~\\
			9: Noisy examples ${({x}_{i},\tilde{y}_{n})}_{n=1}^{N}$.
		\end{algorithmic}
	\end{algorithm*}

\subsection{More Experiments on CIFAR-10 and Tiny-Imagenet}\label{more_exp}
In this section, we compare \SPL{} with more methods on CIFAR-10 and Tiny-Imagenet. Table \ref{table:cifar10_dividmix} records the comparison results with recent benchmark methods. Table \ref{table:tiny-imagenet} compares \SPL{} with other methods on Tiny-ImageNet. Both tables show that \SPL{} achieves competitive results.

\begin{table}[!t]
	\setlength{\abovecaptionskip}{-0.1cm}
	\setlength{\belowcaptionskip}{0.5cm}
		\caption{Comparison with the results reported by DivideMix \citep{Li2020DivideMix} on CIFAR-10. All methods use Pre-ResNet18 as the backbone. 
	The last epoch test accuracy for each method is reported. The noise rate $\epsilon$ is defined as the probability of replacing the label with other labels including the true label. }
	\begin{center}
		\begin{tabular}{|c|c|c|c|c|} 
			\hline 
			\multirow{2}{*}{Dataset} &\multirow{2}{*}{Method}  & \multicolumn{2}{c|}{\emph{Symm}}   \\ 
			\cline{3-4}
			& &  0.2&0.5\\
			\hline
			\multirow{10}{*}{CIFAR-10}&CE&82.7&57.9\\
			&	Bootstrap \citep{reed2014training}&82.9&58.4\\
			&	Forward $T$ \citep{patrini2017making}&83.1&59.4\\
			&	Co-teaching+ \citep{yu2019does}&88.2&84.1\\
			&	Mixup \citep{zhang2018mixup}&92.3&77.6\\
			&	P-correction \citep{yi2019probabilistic}&92.0&88.7\\
			&	Meta-Learning \citep{li2019learning}&92.0&88.8\\
			&	M-correction \citep{arazo2019unsupervised}&93.8&91.9\\
			&   DivideMix \citep{Li2020DivideMix}&95.7&94.4\\
			&   \SPL{}$^{\star}$&\textbf{95.9}&\textbf{94.5}\\
			\hline
			
		\end{tabular}\\
	\end{center}
	\label{table:cifar10_dividmix}
\end{table}

\begin{table}[!t]
	\setlength{\abovecaptionskip}{-0.1cm}
	\setlength{\belowcaptionskip}{0.5cm}
	\caption{The best epoch accuracy for each method on Tiny-ImageNet. }
	\begin{center}
	\small{
		\begin{tabular}{c|c|c|ccc} 
			\hline 
			\multirow{2}{*}{Dataset}& \multirow{2}{*}{Model} &\multirow{2}{*}{Method}  & \multicolumn{2}{c}{\emph{Symm}}   \\ 
			& & & 0.2&0.5\\
			\hline\hline
			\multirow{4}{*}{Tiny-ImageNet}&	\multirow{4}{*}{ResNet18}&MAE \citep{ghosh2017robust}& 2.36 & 1.22  \\
			&	&GCE \citep{zhang2018generalized}&69.84  &66.31  \\
			&	&MentorNet \citep{jiang2017mentornet} & 59.12 &53.83  \\
			&	&\SPL{}$^{\star}$& 73.47&71.07 \\
			\hline
		\end{tabular}\\}
	\end{center}
	\label{table:tiny-imagenet}
\end{table}

\subsection{Ablation Study}\label{sec_5_3}

\textbf{\SPL{} (without consistency training)}: 
By optimizing loss in (\ref{Eq:phase1_prob}), the model can be forced to concentrate only on clean examples. Thus even without consistency training, the network trained by \SPL{} is also noise-robust. Table \ref{table:cifar10-100_wio2} compares \SPL{} with other noise-robust methods which do not apply semi-supervised setting in the framework.  We can see \SPL{} still achieves the best performance among all the methods.

\begin{table*}[!t]
	\setlength{\abovecaptionskip}{-0.1cm}
	\setlength{\belowcaptionskip}{0.5cm}
	\caption{Comparing \SPL{} (without consistency training) with other noise-robust methods on CIFAR-10. }
	\begin{center}
		\small{\begin{tabular}{c|ccccccc} 
			\hline 
			 \multirow{2}{*}{Method}  & \multicolumn{3}{c}{\emph{Symm}} &    \multicolumn{3}{c}{\emph{Asymm}}  \\ 
			 & 0.2&0.4&0.6&0.1&0.2&0.3\\
			\hline\hline
				Cross Entropy&86.98&81.88&74.14&90.69&88.59&86.14\\
				Forward $T$ \citep{patrini2017making} &
				88.11&83.27&75.34&90.11&89.42&88.25\\
				Truncated $L_{q}$ \citep{zhang2018generalized} &89.70&87.62&82.70&90.43&89.45&87.10\\
				$L_{\sf DMI}$ \citep{xu2019l_dmi} 
				&88.74&83.04&76.51&90.28&89.04&87.88\\
				\SPL{} (without consistency training)& 90.70&88.29&82.10&92.41&91.02&90.53   \\
			\hline
		\end{tabular}\\}
	\end{center}
	\label{table:cifar10-100_wio2}
\end{table*}

\textbf{\SPL{} without confidence regularization or dynamic data selection}: The loss in \eqref{Eq:phase1_prob} consists of data selection strategy and confident regularization term. To see how they influence the final accuracy, we perform the ablation study to show their effect on Table \ref{table:analysis_component}. The first row of Table \ref{table:analysis_component} corresponds to 
the traditional CE loss. The second row corresponds to the sample sieve with CE loss. The third row is the typical \SPL{}. The last row is \SPL{}$^\star$. We can see both the dynamic sample sieve in (\ref{Eq:data_sel}) and the confidence-regularized model update in (\ref{Eq:pr_update}) show positive effects on the final accuracy, which suggests the rationality of \SPL{}. 

\begin{table*}[!t]
	\setlength{\abovecaptionskip}{-0.1cm}
	\setlength{\belowcaptionskip}{0.5cm}
	\caption{Analysis of each component of \SPL{} on CIFAR-10. All the methods use ResNet-34.}
	\begin{center}
	\small{	\begin{tabular}{cc|c|cccccc} 
			\hline 
		  \multicolumn{2}{c|}{\emph{Sample Sieve}} &\multirow{2}{*}{\emph{Consistency training}}& \multicolumn{3}{c}{\emph{Symm}} &    \multicolumn{3}{c}{\emph{Asymm}}  \\ 
			 Data selection& Regularization& &0.2&0.4&0.6&0.1&0.2&0.3\\
			\hline\hline
			$ \times$&$ \times$&$ \times$&86.67&81.44&74.63&90.18&88.43&87.27\\
			\checkmark&$ \times$&$ \times$&90.15&86.98&78.36&91.59&90.89&88.51\\
			\checkmark&\checkmark&$ \times$&90.70&88.29&82.10&92.41&91.02&90.53\\
			\checkmark &\checkmark &\checkmark &95.73&93.76&89.78&96.05&95.18&94.67\\
			\hline
		\end{tabular}\\}
	\end{center}
	\label{table:analysis_component}
\end{table*}

\end{document}